\documentclass[letterpaper]{article} 
\usepackage{aaai23}  
\usepackage{times}  
\usepackage{helvet}  
\usepackage{courier}  
\usepackage[hyphens]{url}  
\usepackage{graphicx} 
\usepackage{enumitem}
\urlstyle{rm} 
\usepackage{natbib}  
\usepackage{caption} 
\frenchspacing  
\setlength{\pdfpagewidth}{8.5in} 
\setlength{\pdfpageheight}{11in} 
%
\usepackage{algorithm}

\newcommand{\mD} {\ensuremath{\mathcal{D}}}

\newcommand{\mN} {\ensuremath{\mathcal{N}}}
\newcommand{\mO} {\ensuremath{\mathcal{O}}}

\newcommand{\R} {\mathbb{R}}
\newcommand{\N} {\mathbb{N}}

\newcommand{\wtilde} {\widetilde}

\newcommand{\floor}[1]{\lfloor #1 \rfloor}
\newcommand{\ceil}[1]{\lceil #1 \rceil}
\newcommand{\expct}[1]{\mathbb{E}\left[#1\right]}
\newcommand{\expcthat}[1]{\widehat{\mathbb{E}}\left[#1\right]}
\newcommand{\expctu}[2]{\mathbb{E}_{#1}\left[#2\right]}
\newcommand{\norm}[1]{\left\lVert#1\right\rVert}

\newcommand{\ind}[1]{\mathds{1}\left[#1\right]}

\newcommand{\bX} {\ensuremath{\mathbf{X}}}
\newcommand{\bZ} {\ensuremath{\mathbf{Z}}}
\newcommand{\dimp}{\textnormal{imp}}
\newcommand{\varHat}{\widehat{\text{Var}}}
\newcommand{\unif} {\textnormal{Unif}}

\newcommand{\dstump} {\textsc{DStump}~}
\newcommand{\lasso} {\textsc{Lasso}~}
\newcommand{\sis} {\textsc{SIS}~}


\newcommand{\eat}[1]{}

\newcommand{\oge}{\gtrsim}
\newcommand{\treerank}{\textsc{StumpScore}}

\usepackage{url}            
\usepackage{booktabs}       
\usepackage{amsfonts, dsfont}       
\usepackage{nicefrac}       
\usepackage{microtype}      
\usepackage{xcolor}         

\usepackage{caption}
\usepackage{subcaption}
\usepackage{amsmath}
\usepackage{amssymb}
\usepackage{mathtools}
\usepackage{amsthm}
\usepackage{enumitem}

\usepackage[noend]{algpseudocode}

\theoremstyle{plain}
\newtheorem{theorem}{Theorem}[section]

\newtheorem{lemma}[theorem]{Lemma}
\newtheorem{corollary}[theorem]{Corollary}
\theoremstyle{definition}
\newtheorem{definition}[theorem]{Definition}

\theoremstyle{remark}


\algnewcommand\algorithmicinput{\textbf{Input:}}
\algnewcommand\algorithmicoutput{\textbf{Output:}}
\algnewcommand\Input{\item[\algorithmicinput]}%
\algnewcommand\Output{\item[\algorithmicoutput]}%

%
\usepackage{newfloat}
\usepackage{listings}
\DeclareCaptionStyle{ruled}{labelfont=normalfont,labelsep=colon,strut=off} 
\lstset{%
	basicstyle={\footnotesize\ttfamily},
	numbers=left,numberstyle=\footnotesize,xleftmargin=2em,
	aboveskip=0pt,belowskip=0pt,%
	showstringspaces=false,tabsize=2,breaklines=true}
\floatstyle{ruled}
\newfloat{listing}{tb}{lst}{}
\floatname{listing}{Listing}
%
\pdfinfo{
/TemplateVersion (2023.1)
}

\setcounter{secnumdepth}{2} 

%


\title{Optimal Sparse Recovery with Decision Stumps}
\author{
    Kiarash Banihashem,
    MohammadTaghi Hajiaghayi,
    Max Springer
}
\affiliations{
    University of Maryland\\
    kiarash@umd.edu,
    hajiagha@cs.umd.edu,
    mss423@umd.edu
}
\usepackage{bibentry}

\newif\ifappendix
\appendixtrue

\ifappendix
\usepackage{wrapfig}
\usepackage{hyperref}
\nocopyright
\fi

\begin{document}
\maketitle

\begin{abstract}
  Decision trees are widely used for their low computational cost, good
  predictive performance, and ability to assess the importance of features.
  Though often used in practice for feature selection, the theoretical
  guarantees of these methods are not well understood. We here obtain a tight
  finite sample bound for the feature selection problem in linear regression
  using single-depth decision trees. We examine the statistical properties of
  these ``decision stumps" for the recovery of the $s$ active features from $p$
  total features, where $s \ll p$. Our analysis provides tight sample performance guarantees on
  high-dimensional sparse systems which align with the finite sample bound of
  $O(s \log p)$ as obtained by Lasso, improving upon previous bounds for both
  the median and optimal splitting criteria. Our results extend to the
  non-linear regime as well as arbitrary sub-Gaussian distributions,
  demonstrating that tree based methods attain strong feature selection
  properties under a wide variety of settings and further shedding light on the
  success of these methods in practice. As a byproduct of our analysis, we show
  that we can provably guarantee recovery even when the number of active
  features $s$ is unknown.
  We further validate our theoretical results and proof methodology
  using computational experiments.
\end{abstract}

\section{Introduction}
\label{intro}
  Decision trees are one of the most popular tools used in machine learning.
  Due to their simplicity and interpretability,
  trees are widely implemented 
  by data scientist,
  both individually,
  and in aggregation
  with ensemble methods
  such as random forests and
  gradient boosting \cite{Friedman2001}.
  
  In addition to their predictive accuracy,
  tree based methods are an important tool
  used for the \emph{variable selection}
  problem: 
  identifying a relevant small subset of a high-dimensional feature space of
  the input variables that can accurately predict the output.
  When the relationship between the variables
  is linear,
  it has long been known that \lasso
  achieves the optimal sample complexity rate for this problem \cite{Wainwright2009_1}.
  In practice, however,
  tree-based methods have been shown to be preferable
  as they scale linearly with the size of the data and can capture non-linear relationships between the
  variables \cite{xu2014gradient}.
  
  Notably, various tree structured systems are implemented for this variable selection task across wide-spanning domains. For example, tree based importance measures have been used for prediction of financial distress \cite{Qian_2022}, wireless signal recognition \cite{Li_2018}, credit scoring \cite{Xia_2017},  and the discovery of genes in the field of bioinformatics \cite{Breiman_2001,Bureau_2005,HuynhThu_2010,Lunetta_2004} to name a small fraction.

  Despite this empirical success however,
  the theoretical properties of these tree based methods for feature selection
  are not well understood.
  While several papers
  have considered variants of this problem
  (see Section \ref{related_work} for an overview),
  even in the simple linear case,
  the sample complexity of the decision tree is not well-characterized.
  
  In this paper, we attempt to bridge this gap and analyze
  the variable selection properties of
  single-level decision trees, commonly referred to as ``decision stumps" (\dstump).
  Considering both linear and non-linear settings,
  we show that \dstump achieves nearly-tight
  sample complexity rates for a variety of practical sample distributions. Compared to prior work, our analysis is simpler and
  applies to
  different variants of
  the decision tree,
  as well as 
  more general function classes.
  
  The remainder of the paper is organized as follows: in Section
  \ref{our_results} we introduce our main results on the finite sample
  guarantees of \dstump, in Section \ref{related_work} we discuss important
  prior results in the field of sparse recovery and where they fall flat, in
  Section \ref{alg_desc} we present the recovery algorithm,
  and in Section \ref{main_result},
  we provide
  theoretical guarantees for the procedure under progressively more
  general problem assumptions.
  The proofs of these results provided in Section \ref{proofs}. We supplement the theoretical
  results with computational experiments in Section \ref{exp_results} and
  finally conclude the paper in Section \ref{conclusion}. 
  \vspace{-1mm}
\section{Our Results}
\label{our_results}
  We assume that we are given
  a dataset $\mD = \{(\bX_{i, :}, Y_i)\}_{i=1}^n$
  consisting of $n$ samples
  from the \emph{non-parametric regression} model
  $Y_i = \sum_{k} f_k(\bX_{i, k}) + W_i$
  where $i$ denotes the sample number,
  $\bX_{i, :} \in \R^p$ is the input vector with corresponding output
  $Y_i\in \R$, $W_i\in \R$ are
  i.i.d noise values and
  $f_k: \R \to \R$ are univariate functions that are \emph{$s$-sparse}: 
  \begin{definition}
    A set of univariate functions $\{f_k\}_{k\in [p]}$ is $s$-\emph{sparse} on feature set $[p]$ if there exists a set $S \subseteq [p]$ with size $s = |S| \ll p$ such that $$f_k = 0 \iff k \notin S$$
  \end{definition}
  \noindent Given access to $(\bX_{i, :}, Y_i)_{i=1}^n$,
  we consider the \emph{sparse recovery}
  problem, where we attempt to reveal the set $S$ with only a minimal number of samples.
  Note that this is different from the \emph{prediction}
  problem where the goal is to learn the functions $f_k$.
  In accordance with prior work \cite{Han2020,Kazemitabar2017, Klusowski2020}
  we assume the $\bX_{i, j}$ are i.i.d draws from the
  uniform distribution on $[0, 1]$ with Gaussian noise, 
  $W_i \sim \mathcal{N}(0, 1)$.
  In Section \ref{main_result}, we will 
  discuss how this assumption can be relaxed using
  our non-parametric results to consider more
  general distributions.

  For the recovery problem,
  we consider the \dstump
  algorithm
  as first proposed by \cite{Kazemitabar2017}.
  The algorithm, shown in Algorithm \ref{alg:stump},
  fits a single-level decision tree (stump)
  on each feature 
  using either the ``median split"
  or the ``optimal split" strategy
  and ranks the features by the error
  of the corresponding trees. The median split provides a substantially simplified implementation as we do not need to optimize the stump construction, further providing an improved run time over the comparable optimal splitting procedure. 
  Indeed, the median and optimal split have time complexity at most
  $\mO(np)$ and $\mO(np\log(n))$ respectively.
  
  In spite of this simplification, we show that
  in the widely considered case of \emph{linear design}, where the $f_k$ are linear, \dstump can correctly recover
  $S$ with a sample complexity bound of
  $\mO(s\log p)$, matching 
  the minimax optimal lower bound for the problem as
  achieved by \lasso \cite{Wainwright2009_1,Wainwright2009_2}.
  This result is noteworthy and surprising since the \dstump algorithm (and decision trees in general) is not designed with a linearity assumption, as is the case with \lasso. 
  For this reason, trees are in general utilized for their predictive power in a non-linear model, yet our work proves their value in the opposite.
  We further extend these results for non-linear models and general 
  sub-Gaussian distributions, improving previously known results
  using simpler analysis. 
  In addition, our results do not require the sparsity level $s$ to be known in advance.
  We summarize our main technical results as follows:
    \begin{itemize}
    \item
    We obtain a sample complexity bound of $\mO(s\log p)$
    for the \dstump
    algorithm
    in the linear design
    case,
    matching the optimal rate
    of \lasso and improving prior bounds in the literature for both
    the median and optimal split. This is the first \emph{tight} bound on the sample complexity of single-depth decision trees used for sparse recovery and significantly improves on the existing results.
    \item
    We extend 
    our results to the case of non-linear $f_k$,
    obtaining tighter results
    for a wider class of functions compared to the existing literature.
    We further use these results to
    generalize our analysis to
    sub-Gaussian distributions via the extension to nonlinear $f_k$.
    \item As a byproduct of our improved analysis, we show that our results
      hold for the case where the number of active features $s$ is not known.
      This is the first theoretical guarantee on decision stumps that does
      not require $s$ to be known.
    \item
    We validate our theoretical results using numerical simulations that show the necessity of
    our analytic techniques.
  \end{itemize}
\section{Related Work}
\label{related_work}
While our model framework and the sparsity problem as a whole have been studied extensively 
\cite{Fan2006,Lafferty2008,Wainwright2009_1,Wainwright2009_2}, none have replicated the well 
known optimal lower bound for the classification problem under the given set of
assumptions. Our work provides improved finite sample
guarantees on \dstump for the regression problem that nearly match that of
\lasso by means of weak learners.

Most closely related to our work is that of \cite{Kazemitabar2017}, which formulates the \dstump algorithm and theoretical approach for finite sample guarantees of
these weak learners.
Unlike our nearly tight result on the number of samples required for recovery, \cite{Kazemitabar2017} provides a weaker $\mO(s^2 \log p)$ bound when
using the \emph{median splitting} criterion.
We here demonstrate that the procedure can obtain the near optimal finite sample guarantees by highlighting a 
subtle nuance in the analysis of the stump splitting (potentially of independent interest to the reader);
instead of using the variance of one sub-tree as an impurity measure,
we use the variance of both sub-trees.
As we will show both theoretically and experimentally, this consideration
is vital for obtaining tight bounds.
Our analysis is also more general than that of \cite{Kazemitabar2017}, with applications to both 
median and optimal splits, a wider class of functions $f_k$, and more general distributions.

In a recent work, \cite{Klusowski2021} provide an indirect analysis of
the
\dstump formulation with the \emph{optimal splitting} criterion, by studying its relation to the \sis algorithm of \cite{Fan2006}.
Designed for linear models specifically, 
\sis sorts the features based on their Pearson correlation with the output,
and has optimal sample complexity for the linear setting.
\cite{Klusowski2021} show that when using the optimal splitting criterion, 
 \dstump is \emph{equivalent} to \sis up to logarithmic factors, 
which leads to a sample complexity of $$n \oge s \log(p) \cdot \left(
\log(s) + \log(\log(p))
\right).$$
This improves the results of \cite{Kazemitabar2017}, though the analysis
is more involved.
A similar technique is also used to study the non-linear case,
but the conditions assumed on $f_k$ are hard to interpret and the bounds are weakened.
Indeed, instantiating the non-parametric results for the linear case
implies a \emph{sub-optimal} sample complexity rate of $\mO(s^2\log p)$.
In addition, \cite{Klusowski2021} describe a heuristic algorithm for the case of \emph{unknown} $|S|$, though they fail to prove any guarantees on its performance.

In contrast, we
provide a direct analysis of \dstump.
This allows us to obtain optimal bounds for \emph{both} the \emph{median} and
\emph{optimal} split in the linear case, as well as improved and generalized
bounds in the non-linear case. Our novel approach further allows us to analyze
the case of unknown sparsity level, as we provide the first formal proof for
the heuristic algorithm suggested in \cite{Klusowski2021} and
\cite{fan2011nonparametric}. 
Compared to prior work, our analysis is considerably simpler and applies to
more general settings.

Additionally, various studies have leveraged the simplicity of median splits in
decision trees to produce sharp bounds on mean-squared error for the regression
problem with \emph{random forests} \cite{Duroux_2018,Klusowski_21b}.
In each of these studies, analysis under the median split assumption allows for
improvements in both asymptotic and finite sample bounds on prediction error
for these ensemble weak learners. 
In the present work, we extend this intuition to the feature selection problem for high-dimensional sparse systems, and
further emphasize the utility of the median split even in the singular decision
stump case.

\section{Algorithm Description} \label{alg_desc}
  \subsection{Notation and Problem Setup}
    For mathematical convenience, we adopt matrix notation and use
    $\bX = (\bX_{ij})\in \R^{n \times p}$ and $Y = (Y_i) \in \R^{n}$
    to denote the input matrix and the output vector respectively.
    We use $\bX_{i,:}$ and $X^k$ to refer the $i$-th row and
    $k$-th column of the matrix $\bX$.
    We will also extend the definition of the univariate functions
    $f_k$ to vectors by assuming that
    the function is applied to each coordinate separately:
    for $v\in \R^d$, we define $f_k(v)\in \R^d$ as
    $\left(f_k(v)\right)_{i} = f_k(v_i)$.

    We let $\expct{\cdot}$ and $\text{Var}(\cdot)$
    denote the expectation and variance 
    for random variables, with
    $\expcthat{\cdot}$ and $\varHat\left(\cdot\right)$
    denoting their empirical counterparts i.e,
    for a generic vector $v\in \R^d$,
    $$\expcthat{v} = \frac{\sum_{i=1}^d{v_i}}{d} \quad \text{and}
    \quad
    \varHat\left(v\right) = \frac{\sum_{i=1}^d(v_i - \expcthat{v})^2}{d}.$$
    We will also use $[d]$ 
    to denote the set $\{1, \dots, d\}$.
    Finally, we let $\unif(a, b)$ denote the uniform distribution over $[a, b]$
    and use $C, c > 0$ to denote generic universal constants.

    Throughout the paper, we will use the concept of \emph{sub-Gaussian}
    random variables for stating and proving our results.
    A random variable $Z\in \R$ is called sub-Gaussian if
    there exists a $t$ for which
    $\expct{e^{(Z/t)^2}} \le 2$ and
    its 
    \emph{sub-Gaussian norm} is defined as
    $$\norm{Z}_{\psi_2} = \inf \left\{t > 0: \expct{e^{(Z/t)^2}} \le 2 \right\}.$$
    Sub-Gaussian random variables 
    are well-studied and have many desirable properties, 
    (see \cite{Vershynin2018} for a comprehensive overview),
    some of which we outline below as they are leveraged throughout our analysis.
    
    \begin{enumerate}[label=(\textbf{P{{\arabic*}}}),leftmargin=*]
        \item (Hoeffding's Inequality) If $Z$ is a sub-Gaussian random variable, then for any $t > 0$,
        $$ \Pr \left(\left|{Z - \expct{Z}}\right| \ge t\right)  \le 2e^{-c \left({t} / {\norm{Z}_{\psi_2}}\right)^2}.$$ \label{P1}
        \item If $Z_1, \dots Z_n$ are a sequence of independent sub-Gaussian random variables, then $\sum Z_i$ is also sub-Gaussian with norm satisfying $$\norm{\sum Z_i}_{\psi_2}^2 \le \sum \norm{Z_i}_{\psi_2}^2.$$ \label{P2}
        \item If $Z$ is a sub-Gaussian random variable, then so is $Z- \expct{Z}$ and $\norm{Z - \expct{Z}}_{\psi_2} \le c \norm{Z}_{\psi_2}$. \label{P3}
    \end{enumerate}

  \subsection{\dstump Algorithm}
    We here present the recovery algorithm
    \textsc{DStump}. For each feature $k\in [p]$,
    the algorithms fits a single-level decision tree or ``stump''
    on the given feature and defines the impurity of the feature
    as the error of this decision tree.
    Intuitively, the active features are expected to
    be better predictors of $Y$ and therefore have \emph{lower} impurity values.
    Thus, the algorithm sorts the features based on these values,
    and outputs the $|S|$ features with smallest impurity.
    A formal pseudo-code of our approach is given in Algorithm \ref{alg:stump}.
    
        \begin{algorithm}[H]
      \caption{Scoring using \dstump}
      \label{alg:stump}
      \begin{algorithmic}[1]
        \Input $\bX\in \R^{n\times p}, Y\in \R^n, s\in \N$
          \Output Estimate of $S$
        \For{$k\in \{1, ..., p\}$}
          \State $\tau^k = \text{argsort}(X^k)$
          \For{ $n_{L} \in \{1, \dots, n\}$ }
          \State $n_{R} = n - n_L$
            \State $Y^k_{L}  = (Y_{\tau^k(1)}, ... Y_{\tau^k(n_L)})^T$
            \State $Y^k_{R} = (Y_{\tau^k(n_L+1)}, ... Y_{\tau^k(n)})^T $
            \State $\dimp_{k, n_{L}} = \frac{n_L}{n}\varHat(Y^k_{L}) + \frac{n_R}{n}\varHat(Y^k_{R})$\label{line:imp_1}
          \EndFor
          \If{\emph{median split}}
            \State $\dimp_{k} = \dimp_{k, \floor{\frac{n}{2}}}$.
            \label{line:imp_2}
          \Else
            \State $\dimp_{k} = \min_{\ell}\dimp_{k, \ell}$.
            \label{line:imp_3}
          \EndIf
        \EndFor
        \State \Return $\tau = \text{argsort(\dimp)}$ \label{line:tau}
      \end{algorithmic}
    \end{algorithm}
    \noindent Formally, for each feature $k$,
    the $k$-th column is sorted in increasing order such that
    $X^k_{\tau^k(1)}\le X^k_{\tau^k(2)} \dots \le X^k_{\tau^k(n)}$ with ties broken randomly.
    The samples are then split into two groups:
    the \emph{left half}, consisting of
    $X^k_{i} \le X^k_{\tau^k(n_{L})}$ and
    the \emph{right half} consisting of
    $X^k_{i} > X^k_{\tau^k(n_{L})}$.
    Conceptually, this is the same as splitting a single-depth tree on
    the $k$-th column with a $n_L$ to $n_R$ ratio and collecting the samples in each group.
    The algorithm then calculates the
    variance of the output in each group,
    which represents the optimal prediction error for
    this group with a single value as
    $\varHat\left({Y^k_{L}}\right) = \min_{t}\frac{1}{n_L} \cdot \sum (Y^k_{L, i} - t)^2$.
    The average of these two variances is taken as the impurity. Formally,
    \begin{align}
        \dimp_{k, n_L} =
         \frac{n_L}{n}\varHat(Y^k_{L}) +
        \frac{n_R}{n}\varHat(Y^k_{R}).
        \label{eq:dimp_regr}
    \end{align}
    For the \emph{median split} algorithm,
    the value of $n_{L}$ is simply chosen as $\floor{\frac{n}{2}}$ or $\ceil{\frac{n}{2}}$,
    where as for the optimal split, the value
    is chosen in order to minimize the impurity of the split.
    Lastly, the features are sorted by their impurity values and
    the $|S|$ features with lowest impurity are predicted as
    $S$.
    In Section \ref{sec:unk}, we discuss the case of unknown $|S|$ and obtain an algorithm with nearly matching guarantees.
    
\section{Main Results}\label{main_result}
  \subsection{Linear Design} \label{sec_linear}
    For our first results, we consider the simplest setting of linear models with
    uniform distribution of the inputs.
    Formally, we assume that there is a vector
    $\beta_k \in \R^p$ such that
    $f_k(x) = \beta_k\cdot x$ for all $k$.
    This is equivalent to considering the \emph{linear regression}
    model
    $Y = \sum_k \beta_k X^k + W$.
    We further assume that each entry of the matrix
    $\bX$ is an i.i.d draw from the uniform distribution on
    $[0, 1]$. This basic setting is important from
    a theoretical perspective as it allows us to compare
    with existing results 
    from the literature before extending to more general contexts. This initial result of Theorem \ref{thm_linear},
    provides an upper bound on the failure probability for \dstump in this setting.
    \begin{theorem}\label{thm_linear}
      Assume that each entry of the input matrix
      $X$ is sampled i.i.d from a uniform distribution on $[0, 1]$.
      Assume further that the output vectors satisfy
      the \emph{linear regression} model
      $Y = \sum_k \beta_k X^k + W$
      where $W_i$ are sampled i.i.d from
      $N(0, \sigma_{w}^2)$.
      Algorithm \ref{alg:stump} correctly recovers the active feature set $S$ with probability at least
      \begin{align*}
        1 
         - 4s e^{-c\cdot n}
        - 4pe^{-c \cdot n\frac{\min_{k}\beta_k^2}{\norm{\beta}_2^2 + \sigma_{w}^2}}.
      \end{align*}
      for the median split,
      and with probability at least
      \begin{align*}
        1 
         - 4s e^{-c\cdot n}
        - 4npe^{-c \cdot n\frac{\min_{k}\beta_k^2}{\norm{\beta}_2^2 + \sigma_{w}^2}}.
      \end{align*}
      for the optimal split.
    \end{theorem}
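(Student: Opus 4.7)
Plan. I would first rewrite the impurity in a form that admits a direct per-feature comparison. A short application of the law of total variance---the paper's key ``both sub-trees'' observation---shows
\begin{align*}
    \dimp_{k, n_L} \;=\; \varHat(Y) \;-\; \frac{n_L n_R}{n^2}\Delta_{k, n_L}^2,
    \qquad \Delta_{k, n_L} := \expcthat{Y^k_L} - \expcthat{Y^k_R}.
\end{align*}
Since $\varHat(Y)$ is constant in $k$, correct recovery is equivalent (under the median split) to showing $\Delta_{k,n_L}^2 > \Delta_{k',n_L}^2$ for every active $k \in S$ and inactive $k' \notin S$ with $n_L = \floor{n/2}$, and (under the optimal split) to the corresponding statement with $\max$ over $n_L$ on each side.

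Conditioning on the permutation $\tau^k$, the left/right index sets are deterministic and the linear model gives
\begin{align*}
    \Delta_{k, n_L} = \beta_k\bigl(\expcthat{X^k_L}-\expcthat{X^k_R}\bigr) + \sum_{j\neq k}\beta_j\bigl(\expcthat{X^j_L}-\expcthat{X^j_R}\bigr) + \bigl(\expcthat{W_L}-\expcthat{W_R}\bigr).
\end{align*}
For $j\neq k$ the column $X^j$ and the noise vector $W$ are independent of $\tau^k$, so conditional on $\tau^k$ each summand $\expcthat{X^j_L}-\expcthat{X^j_R}$ is a centered bounded average, hence sub-Gaussian with norm $O(1/\sqrt{n})$, while $\expcthat{W_L}-\expcthat{W_R}$ has norm $O(\sigma_w/\sqrt{n})$. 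Property \ref{P2} then sums these \emph{squared} sub-Gaussian norms, producing a combined $\psi_2$-norm of order $\sqrt{(\norm{\beta}_2^2 + \sigma_w^2)/n}$ for the interference-plus-noise term, after which property \ref{P1} bounds the deviation by $|\beta_k|/4$ with failure probability at most $2\exp\bigl(-cn\beta_k^2/(\norm{\beta}_2^2+\sigma_w^2)\bigr)$. Instantiating with $t = \min_{k\in S}|\beta_k|/4$ and union-bounding over all $p$ features (and, for the optimal split, over the $n$ possible values of $n_L$) gives the $4p\,e^{-cn\min_k\beta_k^2/(\norm{\beta}_2^2+\sigma_w^2)}$ and $4np\,e^{-cn\min_k\beta_k^2/(\norm{\beta}_2^2+\sigma_w^2)}$ summands respectively.

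For active features I additionally need the signal $\beta_k(\expcthat{X^k_L}-\expcthat{X^k_R})$ to concentrate near $-\beta_k/2$. Since $\expcthat{X^k_L}$ is the average of the smallest $\floor{n/2}$ order statistics of $n$ i.i.d.\ $\unif(0,1)$ samples, a standard uniform-empirical-CDF (DKW-type) argument gives $|\expcthat{X^k_L}-1/4|,\,|\expcthat{X^k_R}-3/4| \le 1/8$ with probability at least $1-4e^{-cn}$ per feature, contributing the $4se^{-cn}$ term after a union bound over $S$. On the intersection of the signal event and the interference event, $|\Delta_k| \ge |\beta_k|/4$ for every $k\in S$ whereas $|\Delta_{k'}| \le \min_{k\in S}|\beta_k|/4 \le |\beta_k|/4$ for every $k'\notin S$, so $\Delta_k^2 > \Delta_{k'}^2$ holds simultaneously for all such pairs. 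For the optimal split, the active-feature lower bound still uses only $n_L=\floor{n/2}$ (taking $\max_{n_L}$ only makes it larger), so no extra factor of $n$ enters the first summand.

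The main obstacle is obtaining the exponent $n\beta_k^2/(\norm{\beta}_2^2+\sigma_w^2)$ rather than the weaker $n\beta_k^2/(s\max_k\beta_k^2+\sigma_w^2)$ that a triangle-inequality bound on the interference sum would produce. This is exactly what the $\ell_2$-additivity of squared sub-Gaussian norms in \ref{P2} delivers, provided one first conditions on $\tau^k$ to decouple the off-target columns---without this conditioning trick the analysis degrades to the $\mO(s^2\log p)$ rate of \cite{Kazemitabar2017}. A secondary subtlety is the reformulation itself: retaining only one sub-tree's variance (as in prior work) introduces a lower-order cross-term that is not controlled by the same sub-Gaussian argument, whereas keeping both preserves the clean $\Delta^2$ identity that drives the entire comparison.
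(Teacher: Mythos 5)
Your proposal is correct in substance but takes a genuinely different route from the paper for the active features. The paper deduces Theorem \ref{thm_linear} as a special case of the monotone result (Theorem \ref{thm_additive_monotonic}): there the sorted target column is handled by randomly shuffling the left half and invoking the Beta-mixture representation of Lemma \ref{lemma_X_tilde_distribution} (De Finetti), so that conditional on $\Theta \sim \text{Beta}(n_L+1,n_R)$ the left-half entries are i.i.d.\ $\unif(0,\Theta)$; concentration of $\Theta$ in $[\frac{1}{4},\frac{3}{4}]$ and the inequality $\overline{f}^k_{\frac{1}{4},1}-\overline{f}^k_{0,\frac{3}{4}} \ge g_k/3$ then yield the signal gap. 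You instead decompose $\Delta_k$ into the signal $\beta_k(\expcthat{X^k_L}-\expcthat{X^k_R})$, controlled directly through the averages of the lower/upper half order statistics of a uniform sample (DKW-type argument), and an interference-plus-noise term controlled---exactly as in the paper---by conditioning on $\tau^k$ and accumulating \emph{squared} sub-Gaussian norms to obtain the $\norm{\beta}_2^2+\sigma_w^2$ denominator; your use of the identity $\dimp_{k,n_L}=\varHat(Y)-\frac{n_Ln_R}{n^2}\Delta_{k,n_L}^2$ and your treatment of inactive features coincide with the paper's (Lemma \ref{lemma_important_is_S}). Your version is more elementary and self-contained for the linear/uniform setting, avoiding the mixture lemma and the monotone-function bookkeeping; the paper's route costs those extra lemmas but buys, for the same exponent, the extensions to monotone $f_k$, sub-Gaussian signals, and arbitrary input distributions. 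Two small repairs are needed. First, your constants are inconsistent: a signal lower bound of $|\beta_k|/4$ minus an interference bound of $|\beta_k|/4$ gives only $|\Delta_k|\ge 0$, so tighten the order-statistic accuracy (say to $1/16$) or the interference threshold (say to $\min_{k\in S}|\beta_k|/8$) so that the active bound strictly dominates the inactive one. Second, for the optimal split the union bound over $n_L$ for inactive features requires noting that the weaker concentration of an unbalanced split (the sub-Gaussian norm of $\Delta_{k',n_L}$ scales like $\sqrt{n/(n_Ln_R)}$, not $1/\sqrt{n}$) is exactly offset by the prefactor $\frac{n_Ln_R}{n^2}$ in the identity; this is the same point the paper handles by citing \cite{Li2019}, and with it your $4np$ factor follows as claimed.
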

    
    \noindent Moreover, the above theorem provides a finite sample guarantee
    for the $\dstump$ algorithm
    and does not make \emph{any} assumptions on the
    parameters or their asymptotic relationship.
    In order to obtain a comparison with existing
    literature, \cite{Kazemitabar2017, Klusowski2021, Wainwright2009_1, Wainwright2009_2},
    we consider these bounds in the asymptotic regime
    $\min_{k \in S} \beta_k^2 \in \Omega(\frac{1}{s})$
    and $\norm{\beta}_2 \in \mO(1)$.
    \begin{corollary}\label{cor.only}
      In the same setting as Theorem \ref{thm_linear},
      assume  that $\norm{\beta}_2^2 \in \mO(1)$
      and $\min_{k\in S} \beta_k^2 \in \Omega(\frac{1}{s})$.
      Then Algorithm \ref{alg:stump} correctly recovers the active feature set $S$
      with high probability as long as $n \gtrsim s\log p$.
    \end{corollary}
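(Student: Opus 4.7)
The plan is to substitute the assumed asymptotic bounds directly into the failure-probability expressions from Theorem \ref{thm_linear} and check that each term is $o(1)$ when $n \gtrsim s \log p$. The assumption $\norm{\beta}_2^2 \in \mO(1)$ (together with the fixed constant $\sigma_w^2$) makes the denominator $\norm{\beta}_2^2 + \sigma_w^2$ bounded above by a universal constant, and the assumption $\min_{k\in S}\beta_k^2 \in \Omega(1/s)$ makes the numerator at least $\Omega(1/s)$. Hence the ratio appearing inside the second exponential of Theorem \ref{thm_linear} is $\Omega(1/s)$, so after absorbing constants the median-split failure probability reduces to a bound of the form $4s\,e^{-cn} + 4p\,e^{-c'n/s}$ for universal $c,c' > 0$.

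Next, I would pick a constant $C$ sufficiently large so that $n \geq C s \log p$ simultaneously forces $c'n/s \geq 2\log p$ and $cn \geq 2\log s$ (the latter being automatic since $s \log p \geq \log s$ for $p \geq 2$). This gives $4p\,e^{-c'n/s} \leq 4/p$ and $4s\,e^{-cn} \leq 4/p$, so the total failure probability is $\mO(1/p) = o(1)$. This establishes the claim for the median split.

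For the optimal split the only change is an extra multiplicative factor of $n$ inside the second exponential term, yielding $4np\,e^{-c'n/s}$. To absorb this, I would observe that under $n \gtrsim s \log p$ with $s \leq p$ we have $\log n \lesssim \log(s \log p) \lesssim \log p$, so $\log(np) \lesssim \log p$; taking a slightly larger constant $C$ so that $c'n/s \geq 2\log(np)$ holds again produces an $\mO(1/p)$ failure probability under the same sample-complexity rate $n \gtrsim s \log p$.

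The hard part is essentially nothing technical, since Theorem \ref{thm_linear} has already done the heavy lifting; the only point requiring care is the optimal-split case, where one must verify that the extra $n$ pre-factor does not inflate the required sample size. This is handled cleanly by the $\log n = \mO(\log p)$ observation, so the whole corollary follows by a short substitution-and-constants argument rather than any new probabilistic analysis.
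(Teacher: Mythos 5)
Your proposal is correct and takes essentially the same route as the paper: substitute the assumptions to get $\frac{\min_k\beta_k^2}{\norm{\beta}_2^2+\sigma_w^2}\ge C/s$, take $n$ a sufficiently large constant multiple of $s\log p$ so each term is $\mO(1/p)$, and absorb the extra $n$ factor for the optimal split via $\log(np) = \mO(\log p)$ (the paper does this by fixing $n \asymp s\log p$ and noting $n\le C'p^2$). One small caution: ``$n\gtrsim s\log p$ implies $\log n \lesssim \log p$'' is not literally valid (a lower bound on $n$ does not upper-bound $\log n$); it is justified, as in the paper, by arguing at the threshold $n\asymp s\log p$ and observing the failure bound only decreases for larger $n$.
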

    \noindent The proof of the Corollary is presented in Appendix.
    The above result shows that \dstump is optimal
    for recovery when the data obeys a linear model.
    This is interesting considering tree based methods are known for their strength in capturing non-linear relationships and are not designed with a linearity assumption like \textsc{Lasso}.
    In the next section, we further extend our finite sample bound analysis to
    \emph{non-linear models}.
  \subsection{Additive Design} \label{sec:additive}
    We here consider the case of non-linear $f_k$
    and obtain theoretical guarantees for the original \dstump algorithm.
    Our main result is Theorem \ref{thm_additive_monotonic}
    stated below.
    \begin{theorem}\label{thm_additive_monotonic}
      Assume that each entry of the input matrix
      $X$ is sampled i.i.d from a uniform distribution on $[0, 1]$
      and $Y = \sum_k f_k(X^k) + W$
      where $W_i$ are
      sampled i.i.d from $\mathcal{N}(0, \sigma_{w}^2)$.
      Assume further that each $f_k$ is monotone and 
      $f_k(\unif(0, 1))$ is sub-Gaussian with sub-Gaussian norm
      $\norm{f_k(\unif(0, 1))}_{\psi_2}^2 \le \sigma_{k}^2$.
      For $k\in S$, define $g_k$ as
      \begin{align}
        g_k := \expct{f_k(\unif(\frac{1}{2}, 1))} - \expct{f_k(\unif(0, \frac{1}{2}))}
        \label{eq:g_k}
      \end{align}
      and define $\sigma^2$ as $\sigma^2 = \sigma_w^2 + \sum_k \sigma_k^2$.
      Algorithm \ref{alg:stump} correctly recovers the set $S$ with probability at least
    $$
        1 - 4se^{-cn} - 4pe^{-cn\frac{\min_{k} g_k^2}{\sigma^2}}
   $$
      for the median split and  
      $$1 - 4se^{-cn} - 4npe^{-cn\frac{\min_{k} g_k^2}{\sigma^2}}$$ for the optimal split.
    \end{theorem}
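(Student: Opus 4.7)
The plan is to reduce the analysis to a single scalar quantity per feature via the standard within/between variance decomposition
$$\dimp_{k, n_L} = \varHat(Y) - \frac{n_L n_R}{n^2}\bigl(\expcthat{Y^k_L} - \expcthat{Y^k_R}\bigr)^2.$$
Because $\varHat(Y)$ is independent of $k$, sorting features by impurity is the same as sorting them by the squared mean gap $M_{k, n_L} := \bigl(\expcthat{Y^k_L} - \expcthat{Y^k_R}\bigr)^2$ in descending order. So the entire task is to show that with high probability every $k \in S$ produces a large $M_{k, n_L}$ (of order $g_k^2$) while every $k \notin S$ produces only a small one.

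For a fixed $k$, split $Y_i = f_k(X_{i,k}) + Z^{(k)}_i$ with $Z^{(k)}_i = \sum_{k'\neq k} f_{k'}(X_{i,k'}) + W_i$. Because $X^k$ is independent of everything in $Z^{(k)}_i$ and the left/right assignment is measurable with respect to $X^k$, we obtain
$$\expcthat{Y^k_L} - \expcthat{Y^k_R} = \bigl[\expcthat{f_k(X^k_L)} - \expcthat{f_k(X^k_R)}\bigr] + \bigl[\expcthat{Z^{(k)}_L} - \expcthat{Z^{(k)}_R}\bigr].$$
For the median split and $k \in S$, the left-half $X^k$-values are the first $\lfloor n/2 \rfloor$ order statistics of $n$ i.i.d.\ $\unif(0, 1)$ draws; monotonicity of $f_k$ together with the sub-Gaussian bound $\sigma_k$ then forces the signal term to concentrate around $-g_k$ at rate $O(1/\sqrt{n})$, contributing the $4s e^{-cn}$ factor in the failure probability. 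By property P2, $Z^{(k)}_i$ is sub-Gaussian with norm at most $\sigma$, so by property P1 the noise term concentrates at rate $\sigma/\sqrt{n}$; instantiating Hoeffding at threshold $g_*^2/4$ (with $g_* := \min_{k \in S} g_k$) and taking a union bound over all $p$ features yields the $4p e^{-c n g_*^2 / \sigma^2}$ term. For $k \notin S$, $f_k \equiv 0$, so only the noise term survives and the same Hoeffding bound certifies $M_{k, \lfloor n/2 \rfloor} < g_*^2/4$ as desired.

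For the optimal split, the active-feature analysis is unchanged: the optimal impurity is at most the impurity at the median, which already yields $M_{k} \gtrsim g_k^2$. For inactive features, however, the algorithm picks the worst-case split, so we must control $M_{k, n_L}$ for \emph{every} $n_L \in \{1, \dots, n\}$; the extra union bound over $n$ split positions is precisely what introduces the factor of $n$ in the $4np e^{-c n g_*^2 / \sigma^2}$ term. A final union bound over the two events (all active features have large $M_k$, all inactive features have small $M_k$) yields the stated recovery probabilities.

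The main obstacle is the signal concentration $\expcthat{f_k(X^k_L)} \to \expct{f_k(\unif(0, \tfrac{1}{2}))}$, since after the median split the left-half $X^k$ values are dependent order statistics rather than i.i.d.\ samples from $\unif(0, \tfrac{1}{2})$. Monotonicity of $f_k$ is the key ingredient: it identifies the ordering of $X^k$ with the ordering of $f_k(X^k)$, allowing us to apply sub-Gaussian Hoeffding to the bottom half of an i.i.d.\ $f_k(\unif(0, 1))$ sample and absorb the fluctuation of the empirical median around $1/2$ into the $4s e^{-cn}$ term. Monotonicity also rules out degenerate cases in which $g_k = 0$ even though $f_k \not\equiv 0$ (for example any $f_k$ symmetric about $1/2$), ensuring that $g_k^2$ genuinely measures the signal strength of feature $k$ in the inverse-sample-complexity bound.
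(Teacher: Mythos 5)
Your overall architecture matches the paper's: the impurity identity $\dimp_{k,n_L}=\varHat(Y)-\frac{n_L n_R}{n^2}\bigl(\expcthat{Y^k_L}-\expcthat{Y^k_R}\bigr)^2$, a signal-plus-noise decomposition of $Y$, Hoeffding for the inactive features, and a union bound over the $n$ split points to explain the extra factor of $n$ for the optimal split. However, the central technical step is missing. For an active $k$ you propose to ``apply sub-Gaussian Hoeffding to the bottom half of an i.i.d.\ $f_k(\unif(0,1))$ sample and absorb the fluctuation of the empirical median into the $4se^{-cn}$ term.'' This is not a legitimate application of Hoeffding: the average of the smallest $\lfloor n/2\rfloor$ values of an i.i.d.\ sample is an average of \emph{order statistics}, which are dependent, over an index set that is itself data-dependent; controlling the empirical median alone does not restore independence. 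The paper's proof exists precisely to close this hole: after a random shuffle, the left-half values are exchangeable, and (Lemma \ref{lemma_X_tilde_distribution}, from Kazemitabar et al.) conditional on the split threshold $\Theta\sim\mathrm{Beta}(n_L+1,n_R)$ they are i.i.d.\ $\unif(0,\Theta)$; one then needs $\Theta\in[\frac14,\frac34]$ with probability $1-2e^{-cn}$ (Lemma \ref{lemma_bound_theta}), the fact that $f_k(\unif(0,\theta))$ remains sub-Gaussian with norm $O(\sigma_k)$ (Lemma \ref{lemma_sub_Gaussian_f_delta}), and the deterministic inequality $\overline{f}^k_{\frac14,1}-\overline{f}^k_{0,\frac34}\ge g_k/3$ to convert the conditional means into a gap of order $g_k$. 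None of these ingredients (or substitutes for them) appear in your sketch, and without them the claimed concentration of the signal term ``at rate $O(1/\sqrt n)$'' is unsupported. (Also, that concentration produces a term of the form $e^{-cn g_k^2/\sigma^2}$, not $e^{-cn}$; only the threshold/median deviation event contributes $e^{-cn}$.)

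A secondary issue is your treatment of inactive features under the optimal split: it is not true that $M_{k,n_L}=\bigl(\expcthat{Y^k_L}-\expcthat{Y^k_R}\bigr)^2$ is uniformly small over all $n_L$ (for $n_L=1$ it is of order $\sigma^2$, not $g_*^2$). What can be controlled is the weighted quantity $\frac{n_L n_R}{n^2}M_{k,n_L}$: for a fixed $n_L$ Hoeffding only gives deviations of order $\sigma\sqrt{n/n_L}$ at the target failure probability $e^{-cng_*^2/\sigma^2}$, and this weaker bound must be offset against the factor $\frac{n_L n_R}{n^2}$ before taking the union bound over split points, exactly as the paper does (following Li et al.). Your reduction ``sorting by impurity equals sorting by $M$'' is valid only for the median split, where the weight is common to all features.
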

    \noindent Note that, 
    by instantiating Theorem \ref{thm_additive_monotonic}
    for linear models, we obtain the same bound as in Theorem \ref{thm_linear} implying the above bounds are tight in the linear setting.

    The extension to all monotone functions 
    in Theorem \ref{thm_linear}
    has an important theoretical consequence:
    since the \dstump algorithm is invariant under monotone transformations of
    the input, we can obtain the same results for \emph{any} distribution of
    $\bX_{i,:}$.
    As a simple example, consider $\bX_{ij} \sim \mN(0, 1)$ and assume that we are interested in bounds
    for linear models.
    Define the matrix $\bZ$ as $\bZ_{ij} = F_{\mN}(\bX_{ij})$ where $F_{\mN}(.)$
    denotes the CDF of the Gaussian distribution.
    Since the CDF is an increasing function, running the \dstump
    algorithm with $(\bZ, Y)$ produces the same output as running it with
    $(\bX, Y)$.
    Furthermore, applying the CDF of
    a random variable to itself yields a uniform
    random variable.
    Therefore, $\bZ_{ij}$ are i.i.d draws of the
    $\text{Unif}(0, 1)$ distribution.
    Setting $f_k(t)= \beta_k \cdot F_{\mN}^{-1}(t)$, the results of Theorem \ref{thm_additive_monotonic}
    for $(\bZ, Y)$ imply the same bound as Theorem \ref{thm_linear}.
    Notably, we can obtain the same sample complexity bound of
    $\mO(s\log p)$ for the Gaussian distribution as well.
    In the appendix, 
    we discuss a generalization of this idea,
    which effectively allows us to remove the uniform distribution condition
    in Theorem \ref{thm_additive_monotonic}.
\subsection{Unknown sparsity level}
\label{sec:unk}
A drawback of the previous results are that they assume $|S|$ is given when, in general, this is not the case.
Even if $|S|$ is not known however, Theorem \ref{thm_additive_monotonic} guarantees that the active features are ranked higher than non-active ones
in $\tau$, i.e, 
$\tau(k) < \tau(k')$ for all $k\in  S, k' \notin S$. In order to recover $S$, it suffices to find a threshold $\gamma$ such that
$\max_{k\in S} \dimp_k \le \gamma \le \min_{k\notin S} \dimp_k$.

To solve this, we use the so called ``permutation algorithm'' which is a well known heuristic in the statistics literature \cite{barber2015controlling, chung2013exact, chung2016multivariate,fan2011nonparametric} and was discussed (without proof) by \cite{Klusowski2021} as well.
Formally, we apply a random permutation $\sigma$ on the rows of $X$,
obtaining the matrix $\sigma(X)$ where
$\sigma(X)_{ij} = X_{\sigma(i), j}$,
We then rerun Algorithm \ref{alg:stump} with $\sigma(X)$ and $Y$ as input.
The random permutation
means that $X$ and $Y$ were ``decoupled'' from each other
and
effectively, \emph{all} of the features are now inactive.
We therefore expect $\min_{i, t} \dimp_i(\sigma(X), y)$ to be close to
$\min_{k \notin S} \dimp_k(X, y)$.
Since this estimate may be somewhat conservative, we repeat the sampling and take the minimum value across these repetitions. 
A formal pseudocode is provided in Algorithm \ref{alg:unkn}.
The \treerank{} method is the same algorithm as Algorithm \ref{alg:stump}, with the distinction that it returns 
$\dimp$ in Line \ref{line:tau}

\begin{algorithm}[H]
      \caption{Unknown $|S|$}
      \label{alg:unkn}
      \begin{algorithmic}[1]
        \Input $\bX\in \R^{n\times p}, Y\in \R^n$
        \Output
        {$\gamma\in [\max_{k\in S} \dimp_k ,\min_{k\notin S} \dimp_k]$}
          \For{$t \gets 1, \dots, T - 1$} 
            \State $\sigma^{(t)} \gets$ Random permutation on $[n]$.
            \label{line:random_permute}
            \State $\dimp^{(t)} \gets \treerank{}(\sigma^{(t)}(X), y) $
          \EndFor
          \Return $\min_{i, t} \dimp_i^{(t)}$
      \end{algorithmic}
\end{algorithm}

\noindent Assuming we have used $T$ repetitions in the algorithm, the probability that
$\min_{k \notin S} \dimp_k(X, y) \le \gamma$ is at most
$\frac{1}{T}$. 
While we provide a formal proof in the appendix, the main intuition behind the result is that
$\dimp_{i}^{(t)}$ and $\dimp_{k'}$ for $k'\notin S$ are all the impurities corresponding to inactive features. Thus,
the probability that the maximum across all of these impurities falls in $[p] \backslash S$ is at most
$\frac{p-s}{Tp} \le \frac{1}{T}$. Ensuring that
$\max_{k \in S} \dimp_k(X, y) \le \gamma$ involves treating 
the extra $T$ impurity scores as $(T-1)p$ extra inactive features. This means
that we can use the same results of Theorem \ref{thm_additive_monotonic} with $p$ set
to $Tp$ since our sample complexity bound is \emph{logarithmic} in $p$. The
formal result follows with proof in the appendix.

\begin{theorem}\label{thm.unknown_s}
  In the same setting as Theorem \ref{thm_additive_monotonic},
  let $\gamma$ be the output of Algorithm \ref{alg:unkn} and
  set $\widehat{S}$ to be
  $\{ k: \dimp_k \le \gamma\}$. The probability that
  $\widehat{S}=S$ is at least 
  $$
        1 - T^{-1} - 4se^{-cn} - 4Tpe^{-cn\frac{\min_{k} g_k^2}{\sigma^2}}
    $$
    for the median split and at least
    $$
        1 - T^{-1} - 4se^{-cn} - 4nTpe^{-cn\frac{\min_{k} g_k^2}{\sigma^2}}
    $$
    for the optimal split.
\end{theorem}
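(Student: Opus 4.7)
The plan is to decompose the success event $\widehat{S} = S$ into two conditions and bound their failure probabilities separately. Writing $\gamma = \min_{i \in [p], t \in [T-1]} \dimp_i^{(t)}$ for the threshold returned by Algorithm \ref{alg:unkn}, we have $\widehat{S} = S$ if and only if both (A) $\max_{k \in S} \dimp_k \le \gamma$ and (B) $\min_{k' \notin S} \dimp_{k'} > \gamma$ hold. A union bound on their complements then yields the three-term bound in the statement, with $T^{-1}$ accounting for (B) and the remaining two terms for (A) via the analysis underlying Theorem \ref{thm_additive_monotonic}.

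For (B), I would use a standard permutation-test exchangeability argument. For each $k' \notin S$, the column $X^{k'}$ has i.i.d.\ entries and is independent of $Y$, so for the i.i.d.\ uniformly random permutations $\sigma^{(1)}, \dots, \sigma^{(T-1)}$ the tuple $(X^{k'}, \sigma^{(1)}(X)^{k'}, \dots, \sigma^{(T-1)}(X)^{k'})$ is exchangeable and hence so is $(\dimp_{k'}, \dimp_{k'}^{(1)}, \dots, \dimp_{k'}^{(T-1)})$. Define $\widetilde{M}_t := \min_{k' \notin S} \dimp_{k'}^{(t)}$ with $t = 0$ denoting the original data. Then $(\widetilde{M}_0, \widetilde{M}_1, \dots, \widetilde{M}_{T-1})$ is exchangeable, and since $\gamma \le \widetilde{M}_t$ for every $t \ge 1$ (the overall minimum over all $p$ columns is no larger than the minimum over only inactive columns), the failure event $\widetilde{M}_0 \le \gamma$ forces $\widetilde{M}_0$ to be the minimum of this exchangeable tuple, which occurs with probability at most $1/T$ under a continuous distribution.

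For (A), I would reapply the concentration argument underlying Theorem \ref{thm_additive_monotonic} with the effective feature count inflated from $p$ to at most $Tp$. That argument supplies two uniform tail bounds: an upper tail $\Pr[\max_{k \in S} \dimp_k > L] \le 4s e^{-cn}$ for some threshold $L$, and a lower tail $\Pr[\dimp_k < L] \le 4 e^{-cn \min_k g_k^2/\sigma^2}$ (with an additional factor of $n$ in the optimal-split case, arising from the minimum over the split index) for any column $X^k$ that is independent of $Y$. Each permuted column $\sigma^{(t)}(X)^i$ is independent of $Y$ by the same reasoning used in (B), so the lower tail applies verbatim to every one of the $(T-1)p$ permuted impurities. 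A union bound over these at most $Tp$ ``inactive-like'' impurities contributes the $4Tp e^{-cn \min g_k^2/\sigma^2}$ term, while the $4s e^{-cn}$ term is unchanged because we still have only $s$ truly active columns; the logarithmic dependence of the sample-complexity bound on the feature count makes this inflation mild.

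The main obstacle is setting up the exchangeability in (B) rigorously. The $\dimp_{k'}^{(t)}$ share the response $Y$ across $t$ and the permutation $\sigma^{(t)}$ across $k'$, so they are jointly dependent; however only the exchangeability of the per-run statistics $\widetilde{M}_t$ is needed, and this follows because the identity permutation and each $\sigma^{(t)}$ act interchangeably on i.i.d.\ columns that are independent of $Y$, which is precisely the situation for $k' \notin S$. Once this symmetry is in place, dominating $\gamma$ by $\widetilde{M}_t$ for $t \ge 1$ reduces the remainder of (B) to a clean ``rank one out of $T$'' statement, independent of $n$, $s$, and $p$, and the rest of the proof is a routine extension of the concentration analysis already present in Theorem \ref{thm_additive_monotonic}.
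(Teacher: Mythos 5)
Your proposal is correct and follows essentially the same route as the paper: the $T^{-1}$ term comes from a permutation-symmetry argument showing the original inactive impurities are unlikely to attain the overall minimum, and the remaining terms come from rerunning the Theorem \ref{thm_additive_monotonic} concentration analysis with the $(T-1)p$ permuted impurities treated as extra inactive features, i.e., with $p$ inflated to $Tp$. Your per-run-minimum exchangeability formulation of the first step is a mildly cleaner packaging of the paper's symmetry computation (the paper bounds the chance that the minimum over all $(T-1)p + p - s$ null impurities lands among the $p-s$ original inactive ones by $\frac{p-s}{(T-1)p + p - s} \le \frac{1}{T}$), but the underlying argument is the same.
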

\noindent We note that if we set $T = n^c$ for some constant $c > 0$, we obtain the same $\mO(s \log p)$ as before. 
\section{Proofs}
\label{proofs}
  In this section,
  we  prove Theorem \ref{thm_additive_monotonic}
  as it is the more general version of Theorem \ref{thm_linear},
  and defer with remainder of the proofs to the appendix.
  
  To prove that the algorithm succeeds,
  we need to show that
  $\dimp_k < \dimp_{k'}$
  for all $k\in S, k'\notin S$. We proceed by first proving an upper bound $\dimp_k$ for all $k \in S$ 
  \begin{lemma}\label{lemma_S_is_important}
    In the setting of Theorem \ref{thm_additive_monotonic},
    for any active feature $k\in S$,
    $\dimp_k \le \varHat\left(Y\right) - \frac{\min_{k}g_k^2}{720}$
    with probability at least
    $$
      1 - 
      4e^{-c \cdot n}
      -
      4e^{-c \cdot n \cdot \frac{\min_{k} g_k^2}{\sigma^2}}.
    $$
  \end{lemma}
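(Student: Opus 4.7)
The plan is to start from the variance decomposition
$$\varHat(Y) \;=\; \frac{n_L}{n}\varHat(Y_L^k) + \frac{n_R}{n}\varHat(Y_R^k) + \frac{n_L n_R}{n^2}\bigl(\expcthat{Y_L^k} - \expcthat{Y_R^k}\bigr)^2,$$
which rewrites the impurity at a given split as
$$\dimp_{k, n_L} \;=\; \varHat(Y) \;-\; \frac{n_L n_R}{n^2}\bigl(\expcthat{Y_L^k} - \expcthat{Y_R^k}\bigr)^2.$$
For the median split $n_L n_R/n^2 \ge 1/5$ once $n$ exceeds a small constant, and the optimal split can only decrease $\dimp$, so it suffices to prove the high-probability lower bound $\bigl|\expcthat{Y_L^k} - \expcthat{Y_R^k}\bigr| \gtrsim |g_k|$ for the median split.

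To peel off the noise, decompose $Y_i = f_k(X_i^k) + Z_i$ with $Z_i := \sum_{k' \neq k} f_{k'}(X_i^{k'}) + W_i$. Since $\{X^{k'}\}_{k' \ne k}$ and $W$ are independent of $X^k$, the $Z_i$ are i.i.d.\ and independent of the partition $(L,R)$ that sorting by $X^k$ induces, and properties \ref{P2}--\ref{P3} give $\|Z_i - \mathbb{E} Z_i\|_{\psi_2} \le C\sigma$. Conditioning on $X^k$ and applying Hoeffding (property \ref{P1}) to $\expcthat{Z_R} - \expcthat{Z_L}$, a weighted sum of $n$ independent centred sub-Gaussians with total squared weight $\frac{1}{n_L}+\frac{1}{n_R} = O(1/n)$, yields $\bigl|\expcthat{Z_R} - \expcthat{Z_L}\bigr| \le |g_k|/c_1$ outside an event of probability $2e^{-cn g_k^2/\sigma^2}$. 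What remains is to show that $\expcthat{f_k(X_R^k)} - \expcthat{f_k(X_L^k)}$ concentrates around $g_k$, its expectation in the idealised case where the split happens exactly at the population median $1/2$.

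For the $f_k$ piece, monotonicity means sorting by $X^k$ simultaneously sorts $f_k(X^k)$, so we are comparing the means of the bottom $n_L$ and top $n_R$ order statistics of $f_k(X^k)$. I would condition on the empirical median $M = X^k_{(n_L)}$: a Chernoff bound on the binomial count $|\{i : X_i^k \le 1/2\}|$ gives $|M - 1/2| = O(1/\sqrt{n})$ outside an event of probability $2e^{-cn}$, and conditional on $M = m$ the $X^k$-values on either side are i.i.d.\ $\mathrm{Unif}(0,m)$ and $\mathrm{Unif}(m,1)$ respectively. A second Hoeffding, applied to $f_k(\mathrm{Unif}(0,m))$ and $f_k(\mathrm{Unif}(m,1))$ (both sub-Gaussian with norm at most $\sigma_k \le \sigma$), puts the two empirical averages within $|g_k|/c_2$ of their conditional expectations on an event of probability $1 - 4e^{-cn g_k^2/\sigma^2}$. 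The delicate step --- and what I expect to be the main obstacle --- is replacing those conditional expectations by their ``ideal'' counterparts $\mathbb{E}[f_k(\mathrm{Unif}(0,\tfrac12))]$ and $\mathbb{E}[f_k(\mathrm{Unif}(\tfrac12,1))]$ with additive error $o(|g_k|)$, despite $f_k$ being only sub-Gaussian (so the integrand could in principle be large near the moving boundary $1/2$); monotonicity of $f_k$ together with the sub-Gaussian tail of $f_k(\mathrm{Unif}(0,1))$ is what rules out such a spike. Combining these estimates yields $\bigl|\expcthat{Y_R^k} - \expcthat{Y_L^k}\bigr| \ge |g_k|/12$ on the intersection event, and substituting into the decomposition produces $\dimp_k \le \varHat(Y) - g_k^2/720$ as required.
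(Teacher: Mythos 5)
Your architecture is the same as the paper's up to the last step: the identity $\dimp_k = \varHat(Y) - \frac{n_L n_R}{n^2}\bigl(\expcthat{Y^k_L}-\expcthat{Y^k_R}\bigr)^2$, the reduction to lower-bounding the mean gap at the median split, splitting off the noise $Z_i=\sum_{j\ne k}f_j(X^j_i)+W_i$ (which the paper bundles into one sub-Gaussian norm bound rather than treating separately), and conditioning on the split-point order statistic (your $M$ plays the role of the paper's $\Theta\sim\mathrm{Beta}(n_L+1,n_R)$ from Lemma \ref{lemma_X_tilde_distribution} and Lemma \ref{lemma_bound_theta}). The genuine gap is exactly the step you flag as the main obstacle, and your proposed target there is both unproven and false as stated. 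First, a quantitative slip: at confidence level $1-2e^{-cn}$ you can only localize the empirical median to a \emph{constant-width} window (deviation $t$ costs $e^{-cnt^2}$), not $O(1/\sqrt n)$; so you must work with $M\in[\frac14,\frac34]$. Second, and more importantly, over such a window the conditional expectations are \emph{not} within $o(|g_k|)$ of the ideal ones: writing $\overline{f}^k_{a,b}:=\expct{f_k(\unif(a,b))}$ and $m=\tfrac12+\epsilon$, monotonicity gives $\overline{f}^k_{0,m}-\overline{f}^k_{0,\frac12}=\frac{\epsilon}{m}\bigl(\overline{f}^k_{\frac12,m}-\overline{f}^k_{0,\frac12}\bigr)$, which for a step function jumping at $\tfrac12$ and $m=\tfrac34$ equals $g_k/3$ — of order $g_k$, not $o(g_k)$. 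Sub-Gaussianity does not rescue the $o(|g_k|)$ claim; no argument you sketch does.

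The fix is to abandon "close to ideal" and settle for what is actually needed: monotonicity gives the one-sided bounds $\overline{f}^k_{0,m}\le\overline{f}^k_{0,\frac34}$ for $m\le\frac34$ and $\overline{f}^k_{m,1}\ge\overline{f}^k_{\frac14,1}$ for $m\ge\frac14$, and a two-line conditional-expectation computation (the paper's Lemma \ref{lemma_f_1_4_3_4_1_2}) shows $\overline{f}^k_{\frac14,1}-\overline{f}^k_{0,\frac34}\ge g_k/3$, i.e.\ the worst-case shift eats at most a constant fraction of $g_k$. Combined with your Hoeffding steps this yields $\expcthat{Y^k_R}-\expcthat{Y^k_L}\ge g_k/12$ and then $g_k^2/720$ exactly as you intend, at the advertised failure probability and with no extra log factors (which your route would incur if you shrank the window to force a small error). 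Two further minor repairs: conditional on the split point, $f_k(\unif(0,m))$ is sub-Gaussian only with norm $C\sigma_k$, not $\sigma_k$ (the paper's Lemma \ref{lemma_sub_Gaussian_f_delta}), and in your conditioning the left group contains the order statistic $m$ itself, a bookkeeping point the paper's formulation via $\Theta=X^k_{(n_L+1)}$ avoids.
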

  \noindent Subsequently, we need to prove
  an analogous lower bound
  on $\dimp_{k'}$
  for all $k'\notin S$.
  \begin{lemma}\label{lemma_important_is_S}
    In the setting of Theorem \ref{thm_additive_monotonic},
    for any inactive feature $k'\notin S$,
    $\dimp_{k'}
      >
      \varHat\left(Y\right) - \frac{\min_{k}g_k^2}{720}$
    with probability at least 
    $$1-4e^{-c \cdot n \cdot \frac{\min_{k} g_k^2}{\sigma^2}}$$
    for the median split and
    $$1-4ne^{-c \cdot n \cdot \frac{\min_{k} g_k^2}{\sigma^2}}$$ for the optimal split.
  \end{lemma}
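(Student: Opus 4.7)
The plan is to reduce the lower bound on $\dimp_{k'}$ to a concentration statement about the difference of subtree means. First, I would apply the standard ANOVA-style variance decomposition, valid for any partition of $Y$ into a left piece of size $n_L$ and a right piece of size $n_R$:
\begin{align*}
\varHat(Y) = \frac{n_L}{n}\varHat(Y^{k'}_L) + \frac{n_R}{n}\varHat(Y^{k'}_R) + \frac{n_L n_R}{n^2}\bigl(\expcthat{Y^{k'}_L} - \expcthat{Y^{k'}_R}\bigr)^2.
\end{align*}
Rearranging gives $\dimp_{k',n_L} = \varHat(Y) - \tfrac{n_L n_R}{n^2}\bigl(\expcthat{Y^{k'}_L} - \expcthat{Y^{k'}_R}\bigr)^2$, so it suffices to show that the subtracted term is strictly less than $\min_k g_k^2 / 720$ for the relevant choices of $n_L$.

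Next I would exploit that $k' \notin S$ means $f_{k'} = 0$, so the column $X^{k'}$ is independent of every other column of $\bX$ and of $W$, and hence of $Y$. Consequently the sort permutation $\tau^{k'}$ is uniformly random and independent of $Y$, so the set $L = \{\tau^{k'}(1),\dots,\tau^{k'}(n_L)\}$ is a uniformly random size-$n_L$ subset of $[n]$ that is independent of $Y$. This decouples the split from the response and lets me condition on $L$ and still retain all the randomness of $Y$.

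Then I would expand the difference of means as a weighted sum,
\begin{align*}
\expcthat{Y^{k'}_L} - \expcthat{Y^{k'}_R} = \sum_{i=1}^n b_i (Y_i - \mu), \qquad b_i = \frac{\mathds{1}[i\in L]}{n_L} - \frac{\mathds{1}[i\notin L]}{n_R},
\end{align*}
using $\sum_i b_i = 0$ to subtract the common mean $\mu = \expct{Y_i}$. Writing $Y_i - \mu = \sum_{k\in S}(f_k(X_{ik}) - \expct{f_k}) + W_i$ and combining \ref{P2} with \ref{P3} gives $\|Y_i-\mu\|_{\psi_2}^2 \lesssim \sum_k \sigma_k^2 + \sigma_w^2 = \sigma^2$. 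Conditioning on $L$ and applying \ref{P2} once more yields
\begin{align*}
\|\expcthat{Y^{k'}_L} - \expcthat{Y^{k'}_R}\|_{\psi_2}^2 \lesssim \sigma^2 \sum_i b_i^2 = \sigma^2\!\left(\tfrac{1}{n_L} + \tfrac{1}{n_R}\right) = \frac{\sigma^2 n}{n_L n_R}.
\end{align*}

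Finally, I would invoke Hoeffding's inequality \ref{P1} with threshold $t = \sqrt{\min_k g_k^2 \cdot n^2/(720\, n_L n_R)}$. The resulting bound,
\begin{align*}
\Pr\!\left(\tfrac{n_L n_R}{n^2}(\expcthat{Y^{k'}_L} - \expcthat{Y^{k'}_R})^2 \ge \tfrac{\min_k g_k^2}{720}\right) \le 2\exp\!\left(-c\,n\,\tfrac{\min_k g_k^2}{\sigma^2}\right),
\end{align*}
is independent of $n_L$, so it transfers from the conditional to the unconditional statement. For the median split we evaluate at $n_L = \floor{n/2}$ and absorb the $2$ into a constant of $4$; for the optimal split we union-bound over the at most $n$ candidate values of $n_L$, producing the extra factor of $n$. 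The main obstacle is really the first step: spotting the variance decomposition and realizing that the right measure of how ``bad'' an inactive split is should be the subtree-mean gap rather than a single subtree's variance — this is precisely the subtlety the authors highlight when contrasting with prior work.
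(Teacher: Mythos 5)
Your proposal is correct and follows essentially the same route as the paper: the identity \eqref{mdi_identity}, the observation that $\tau^{k'}$ (hence the split) is independent of $Y$ for an inactive feature, sub-Gaussian Hoeffding concentration of the subtree means, and a union bound over the $n$ candidate split points for the optimal criterion. The only difference is cosmetic but pleasant: by writing $\expcthat{Y^{k'}_L}-\expcthat{Y^{k'}_R}=\sum_i b_i(Y_i-\mu)$ with $\sum_i b_i^2 = \tfrac{1}{n_L}+\tfrac{1}{n_R}$, you get an exponent independent of $n_L$ in one step, which cleanly absorbs the unbalanced-split issue that the paper handles by noting the weaker per-subtree concentration is offset by the $\tfrac{n_L n_R}{n^2}$ factor (following \cite{Li2019}).
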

  \noindent Taking the union bound over all $k,k'$,
  Lemmas \ref{lemma_S_is_important} and \ref{lemma_important_is_S}
  prove the theorem as
  they show that
  $\dimp_k < 
  \dimp_{k'}$
  for all $k\in S, k'\notin S$ with the desired probabilities.
  
  We now focus on proving Lemma \ref{lemma_S_is_important}.
  We assume without loss of generality that $f_k$ is 
  increasing \footnote{The case of decreasing $f_k$ follows by a 
  symmetric argument or by mapping $X^k \to -X^k$}.
  We further assume that $\expct{f_k(X^k_i)}=0$ as \dstump is invariant under constant shifts of the output.
  Finally, we assume that $n > 5$, as for $n \le 5$, the theorem's
  statement can be made vacuous by choosing large $c$. 
  ~\\
  We will assume $\{n_{L}, n_{R}\} = \{\floor{\frac{n}{2}}, \ceil{\frac{n}{2}}\}$ throughout the proof;
  as such our results will hold for both the optimal and median splitting criteria.
  As noted before, a key point for obtaining a tight bound is
  considering \emph{both} sub-trees in the analysis instead of considering them
  individually.
  Formally, while the impurity is usually defined via variance as in
  \eqref{eq:dimp_regr},
  it has the following equivalent definition.
  \begin{align}
    \dimp_k = 
    \varHat(Y) - 
    \frac{n_{L} \cdot n_{R}}{n^2} \cdot \left(
      \expcthat{Y^k_L} - \expcthat{Y^k_R}
    \right)^2.
    \label{mdi_identity}
  \end{align}
  The above identity is commonly
  used for the analysis of decision trees and their properties \cite{Breiman1983,Li2019, Klusowski2020, Klusowski2021}.
  From an analytic perspective,
  the key difference between \eqref{mdi_identity} and
  \eqref{eq:dimp_regr}
  is that the empirical averaging is calculated \emph{before} taking the square, allowing us to more simply analyze the \emph{first} moment rather than the second.

  Intuitively, we want to use concentration inequalities to show that
  $\expcthat{Y^k_L}$ 
  and
  $\expcthat{Y^k_R}$
  concentrate around
  their expectations and lower bound
  $|\expct{Y^k_L} - \expct{Y^k_R}|$.
  This is challenging however
  as concentration results
  typically require an i.i.d assumption
  but, as we will see, $Y^k_{L, i}$ are not i.i.d.
  More formally, for each $k$, define the random variable
  $X^k_L \in \R^{n_L}$
  as 
  $(X^k_{\tau^k(1)}, \dots, X^k_{\tau^k(n_L)})^T$ and thus
  $
    Y^k_{L, i} = f_k(X^k_{L, i}) + \sum_{j\ne k} f_j(X^j_{\tau^k(i)}) + W_{\tau^k(i)}.
    $
  While the random vectors $X^{j\ne k}$ and $W$
  have i.i.d entries,
  $X^k_{L}$
  was obtained by sorting
  the coordinates of $X^k$.
  Thus, its coordinates are non-identically distributed and dependent.
  To solve the first problem, observe that
  the empirical mean
  is invariant under permutation and
  we can thus randomly shuffle the elements of
  $Y^k_{L}$ in order to obtain a vector with
  identically distributed coordinates.
  Furthermore, by
  De Finetti's Theorem, any random vector
  with coordinates that are identically distributed (but not necessarily independent),
  can be viewed as a \emph{mixture} of i.i.d vectors, 
  effectively solving the second problem.
  Formally, the following result holds.
  \begin{lemma}[Lemma 1 in \cite{Kazemitabar2017}]\label{lemma_X_tilde_distribution}
    let $\tilde{\tau}: [n_L] \to [n_L]$
    be a random permutation on $[n_L]$ independent of
    $(\bX, W)$
    and
    define $\wtilde{X}^k_L\in \R^{n_L}$ 
    as
    $\wtilde{X}^k_{L, i} := X^k_{L, \tilde{\tau}(i)}$.
    The random vector $\wtilde{X}^k_L$ is distributed as
    a mixture of uniform i.i.d uniform vectors of size $n_L$ on $[0, \Theta]$
    with $\Theta$ sampled from $\text{Beta}(n_L + 1, n_R)$.
  \end{lemma}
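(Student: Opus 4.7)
The plan is to condition on the $(n_L+1)$-th order statistic of $X^k$, which I denote $\Theta := X^k_{\tau^k(n_L+1)}$, show that conditionally $\wtilde{X}^k_L$ consists of $n_L$ i.i.d.\ $\unif(0, \Theta)$ random variables, and finally marginalize over $\Theta$ to recover the mixture representation.

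First, I would invoke the following standard property of uniform order statistics: if $U_1, \dots, U_n$ are i.i.d.\ $\unif(0,1)$ with order statistics $U_{(1)} \le \dots \le U_{(n)}$, then conditional on $U_{(n_L+1)} = \theta$, the smaller order statistics $(U_{(1)}, \dots, U_{(n_L)})$ are distributed as the order statistics of $n_L$ i.i.d.\ $\unif(0,\theta)$ random variables. This can be verified directly from the joint density of the uniform order statistics on $[0,1]^n$, or more cleanly from the representation of uniform order statistics as normalized partial sums of exponentials, which is exchangeable once the sum up through position $n_L+1$ is fixed.

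Next, I would use the simple fact that applying a uniformly random permutation to a sorted sample recovers an exchangeable (indeed i.i.d.) sample: if $(V_{(1)}, \dots, V_{(n_L)})$ are the order statistics of i.i.d.\ variables $V_1, \dots, V_{n_L}$ and $\tilde{\tau}$ is an independent uniform permutation, then $(V_{(\tilde{\tau}(1))}, \dots, V_{(\tilde{\tau}(n_L))})$ has the same joint law as $(V_1, \dots, V_{n_L})$. Combined with the conditioning argument above, this yields that conditional on $\Theta = \theta$, the vector $\wtilde{X}^k_L$ is distributed as $n_L$ i.i.d.\ $\unif(0, \theta)$ variables.

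Finally, I would identify the marginal law of $\Theta$. Since $\Theta$ is the $(n_L+1)$-th order statistic of $n$ i.i.d.\ $\unif(0,1)$ random variables, a textbook computation (using the Beta--Gamma calculus, or directly integrating the order statistic density) gives $\Theta \sim \text{Beta}(n_L+1, n - n_L) = \text{Beta}(n_L+1, n_R)$. Combining the three steps via the tower property produces the mixture representation. The only subtle point — the place where I expect to be most careful — is ensuring that the independence assumption on $\tilde{\tau}$ is invoked correctly: since $\tilde{\tau}$ is independent of $(\bX, W)$, it is in particular independent of $\Theta$, so the exchangeability argument is valid both conditionally on $\Theta$ and after marginalizing. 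Everything else reduces to well-known distributional identities for uniform order statistics.
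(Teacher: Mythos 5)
Your proof is correct, and it follows the standard route: the paper itself does not prove this lemma but imports it as Lemma~1 of \cite{Kazemitabar2017}, whose argument is essentially the one you give---condition on the $(n_L+1)$-th order statistic $\Theta = X^k_{\tau^k(n_L+1)}$, use the fact that given $\Theta=\theta$ the smaller order statistics are those of $n_L$ i.i.d.\ $\unif(0,\theta)$ samples, undo the ordering with the independent random shuffle $\tilde{\tau}$ (valid since the distribution is atomless, so ties occur with probability zero), and identify the marginal $\Theta \sim \text{Beta}(n_L+1, n_R)$. Your attention to the independence of $\tilde{\tau}$ from $(\bX, W)$, hence from $\Theta$, is exactly the point that makes the conditional exchangeability step legitimate, so no gap remains.
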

  \noindent Defining 
  $\wtilde{Y}^k_L \in \R^{n_L}$
  as
  $\wtilde{Y}^k_{L, i} := Y^k_{L, \tilde{\tau}(i)}$,
  it is clear that
  $\expcthat{\wtilde{Y}^k_{L}} = \expcthat{Y^k_L}$
  and therefore we can analyze
  ${\wtilde{Y}^k_L}$
  instead of ${Y^k_L}$ as
  \begin{align*}
    \wtilde{Y}^k_{L, i} := f_k(\wtilde{X}^k_{L, i})
    + \sum_{j\ne k}f_j(X^j_{\tau^k\circ\tilde{\tau}(i)})
    + W_{\tau^k\circ\tilde{\tau}(i)}
  \end{align*}
  which, given Lemma \ref{lemma_X_tilde_distribution},
  can be seen as a mixture of i.i.d random variables.

  Lemma \ref{lemma_X_tilde_distribution}
  shows that
  there are two sources of randomness in the distribution of
  $\wtilde{Y}^k_{L,i}$:  the
  mixing variable $\Theta$ and the sub-Gaussian
  randomness of $\wtilde{X}^k_L | \Theta$ and
  $(X^{j\ne k}, W)$. For the second source,
  it is possible to use standard concentration inequalities
  to show that 
  conditioned on $\Theta$,
  $\expcthat{\wtilde{Y}^k_{L}}$
  concentrates around $\expct{\wtilde{Y}^k_{L, 1} | \Theta}$.
  We will formally do this in Lemma \ref{lemma_conentration_y_tilde}.
  Before we do this however, we focus on the first source
  and how $\Theta$ affects the distribution of
  $\wtilde{Y}^k_{L, i}$.
  
  Since $\Theta$ is sampled from $\text{Beta}(n_L + 1, n_R)$,
  we can use standard techniques to show that
  it concentrates around $\frac{1}{2}$.
  More formally, we can use the following lemma, the proof of which is in the appendix.
  \begin{lemma}\label{lemma_bound_theta}
    If $n \ge 5$, we have $\Theta \in [\frac{1}{4}, \frac{3}{4}]$
    with probability at least
    $1 - 2e^{-cn}$.
  \end{lemma}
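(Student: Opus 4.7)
\textbf{Proof plan for Lemma \ref{lemma_bound_theta}.}

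The plan is to exploit the classical identification of the Beta distribution with an order statistic of i.i.d.\ uniforms, and then reduce the tail bound to a standard Binomial concentration argument. Specifically, if $U_1, \dots, U_n$ are i.i.d.\ $\unif(0, 1)$ and $U_{(1)} \le \dots \le U_{(n)}$ denote their order statistics, then $U_{(n_L + 1)} \sim \text{Beta}(n_L + 1, n_R)$ (since $n_L + n_R = n$). Hence we may couple $\Theta$ with $U_{(n_L+1)}$ and analyze the latter.

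First I would handle the lower tail. Observe that $\Theta < \tfrac{1}{4}$ if and only if at least $n_L + 1$ of the $U_i$ fall below $\tfrac{1}{4}$. Writing $B := \sum_{i=1}^n \ind{U_i < 1/4}$, we have $B \sim \text{Binomial}(n, 1/4)$ with mean $n/4$, and since $\{n_L, n_R\} = \{\floor{n/2}, \ceil{n/2}\}$ we have $n_L + 1 \ge \ceil{n/2} \ge n/2$. Thus
\begin{align*}
    \Pr\!\left(\Theta < \tfrac{1}{4}\right) \le \Pr\!\left(B \ge \tfrac{n}{2}\right) = \Pr\!\left(B - \expct{B} \ge \tfrac{n}{4}\right),
\end{align*}
and a Chernoff/Hoeffding bound gives a bound of the form $e^{-cn}$ for some absolute constant $c > 0$. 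The upper tail is symmetric: $\Theta > \tfrac{3}{4}$ iff at least $n - n_L$ of the $U_i$ lie above $\tfrac{3}{4}$, and $n - n_L \ge \ceil{n/2}$, so the same Chernoff argument applied to $B' := \sum_{i=1}^n \ind{U_i > 3/4} \sim \text{Binomial}(n, 1/4)$ yields $\Pr(\Theta > \tfrac{3}{4}) \le e^{-cn}$. A union bound then produces the stated probability $1 - 2e^{-cn}$.

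The assumption $n \ge 5$ is used only to ensure that the gap $|n_L + 1 - n/4|$ and $|n - n_L - n/4|$ are each at least a constant multiple of $n$, so that the Chernoff exponent is genuinely linear in $n$ with a universal constant $c$; for very small $n$ one would need to absorb boundary effects into the constant. I do not expect a real obstacle here: the only thing to be careful about is checking the two inequalities $n_L + 1 \ge n/2$ and $n - n_L \ge n/2$ that hold uniformly in the choice of which of $\floor{n/2}, \ceil{n/2}$ is assigned to $n_L$, and then invoking a standard sub-Gaussian/Chernoff tail bound for bounded independent indicators (e.g.\ via property \ref{P1} applied to the centered Bernoullis). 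The whole argument is thus a short reduction to a well-known binomial concentration inequality.
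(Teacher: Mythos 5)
Your proposal is correct and follows essentially the same route as the paper's appendix proof: identify $\text{Beta}(n_L+1,n_R)$ with the order statistic $U_{(n_L+1)}$ of i.i.d.\ uniforms, rewrite each tail event as a binomial tail, apply Hoeffding/Chernoff, and union bound. One small inaccuracy: $n - n_L \ge \ceil{\frac{n}{2}}$ does not hold uniformly (it fails when $n_L = \ceil{\frac{n}{2}}$ and $n$ is odd), but the weaker bound $n - n_L \ge \floor{\frac{n}{2}}$ does hold and still leaves a margin over $n/4$ that is linear in $n$ for $n \ge 5$, so the stated conclusion is unaffected.
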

  \noindent Given the above result,
  we can analyze $\wtilde{Y}^k_L$ assuming
  $\Theta \in [1/4, 3/4]$.
  In this case, we can use concentration inequalities to show that
  with high probability,
  $\expcthat{\wtilde{Y}^k_L}$
  concentrates around 
  $\expct{f_k(\unif(0, \Theta)})$.
  Since $f_k$ was assumed to be increasing,
  this can be further bounded by
  $\expct{f_k(\unif(0, \frac{3}{4})})$.
  Formally, we obtain the following result.
  \begin{lemma}\label{lemma_conentration_y_tilde}
    Let $k\in S$ be an active feature.
    For any $\theta\in [\frac{1}{4}, \frac{3}{4}]$,
    \begin{align*}
      \Pr \left[
      \expcthat{\wtilde{Y}^k_L} - \expct{\wtilde{Y}^k_{L, 1} | \Theta = \theta} 
        \ge t
        | \Theta = \theta
      \right]
      \le 2e^{-cn\frac{t^2}{\sigma^2}}.
    \end{align*}
    Furthermore,
    letting $\overline{f}^k_{a, b}$ denote
    $\expct{f_k(\unif(a, b))}$,
    \begin{align}
      \Pr\left[\expcthat{{Y}^k_L} \ge
      \overline{f}^k_{0, \frac{3}{4}} + g_k/8\right]
      \le 
      2e^{-c \cdot n} + 2e^{-c \cdot n \cdot \frac{g_k^2}{\sigma^2}}.
      \label{eq_jan9_1500}
    \end{align}
  \end{lemma}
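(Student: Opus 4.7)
The plan is to establish the concentration bound first and then derive the claim about $\expcthat{Y^k_L}$ from it. For the concentration statement, the key structural observation is that Lemma \ref{lemma_X_tilde_distribution} makes $\wtilde{X}^k_L$ conditionally i.i.d.\ uniform on $[0, \theta]$ given $\Theta = \theta$. Combining this with the independence of $X^{j}$ (for $j \ne k$) and $W$ from $X^k$, and noting that the composed map $\tau^k \circ \tilde{\tau}$ picks a uniformly random subset of $[n]$ of size $n_L$ in uniformly random order, I would argue that the coordinates of $\wtilde{Y}^k_L$ are conditionally i.i.d.\ given $\Theta$. The non-obvious part is that for a fixed $j \ne k$, the sequence $\big(X^j_{\tau^k \circ \tilde{\tau}(i)}\big)_{i=1}^{n_L}$ has the same joint distribution as $(X^j_1, \dots, X^j_{n_L})$ by symmetry of the i.i.d.\ entries of $X^j$ and the independence of the selection indices from $X^j$, so these are truly i.i.d.\ rather than merely exchangeable.

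Given this, the main step in part~(1) is to bound the sub-Gaussian norm of each coordinate $\wtilde{Y}^k_{L, i}$ conditional on $\Theta = \theta$. Decomposing
\[
    \wtilde{Y}^k_{L, i} = f_k(\wtilde{X}^k_{L, i}) + \sum_{j \ne k} f_j\big(X^j_{\tau^k \circ \tilde{\tau}(i)}\big) + W_{\tau^k \circ \tilde{\tau}(i)},
\]
each cross-feature term has sub-Gaussian norm at most $\sigma_j$ and the noise at most $\sigma_w$, since the distributions are unchanged by the random selection. The only subtle term is $f_k(\wtilde{X}^k_{L, i})$, whose argument is $\unif(0, \theta)$ rather than $\unif(0, 1)$. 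For $\theta \in [\tfrac{1}{4}, \tfrac{3}{4}]$ the density of $\unif(0, \theta)$ is at most $4$ times that of $\unif(0, 1)$ on $[0, \theta]$, so the tails of $f_k(\unif(0, \theta))$ are only a constant factor larger than those of $f_k(\unif(0, 1))$, giving a sub-Gaussian norm of $O(\sigma_k)$. Summing via property \ref{P2} produces a total norm of $O(\sigma)$, and applying Hoeffding's inequality \ref{P1} to the average of $n_L \ge c n$ conditionally i.i.d.\ sub-Gaussian variables (centered via property \ref{P3}) yields the bound $2 e^{-c n t^2/\sigma^2}$.

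For part~(2), I would combine part~(1) with Lemma \ref{lemma_bound_theta}. Since $\expcthat{\wtilde{Y}^k_L} = \expcthat{Y^k_L}$ by permutation invariance, and the cross-feature and noise contributions have mean zero (using the normalization $\expct{f_k(\unif(0,1))} = 0$), conditional on $\Theta = \theta$ we have $\expct{\wtilde{Y}^k_{L, 1} \mid \Theta = \theta} = \expct{f_k(\unif(0, \theta))}$. Writing $\unif(0, \theta) \steq \theta U$ for $U \sim \unif(0, 1)$ and using that $f_k$ is increasing, the map $\theta \mapsto \expct{f_k(\unif(0, \theta))}$ is monotone, so for $\theta \le \tfrac{3}{4}$ this expectation is at most $\overline{f}^k_{0, 3/4}$. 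Invoking part~(1) with $t = g_k/8$ gives, conditional on $\Theta \in [\tfrac{1}{4}, \tfrac{3}{4}]$, probability at most $2 e^{-c n g_k^2/\sigma^2}$ for the desired event. A union bound with the complementary event $\Theta \notin [\tfrac{1}{4}, \tfrac{3}{4}]$, controlled by Lemma \ref{lemma_bound_theta} at cost $2 e^{-c n}$, produces the claimed bound $2 e^{-c n} + 2 e^{-c n g_k^2/\sigma^2}$.

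The hardest step will be rigorously justifying the conditional i.i.d.\ structure of $\wtilde{Y}^k_L$ given $\Theta$. The subtlety is that the indices $\tau^k \circ \tilde{\tau}(i)$ are sampled \emph{without replacement} from $[n]$, so only the symmetry of the underlying i.i.d.\ entries of $X^{j}$ (and independence from the selection) rescues the argument; writing this carefully while tracking the conditioning on $\Theta$ is the main bookkeeping burden. A secondary technical point is the constant-factor bound on the sub-Gaussian norm of $f_k(\unif(0, \theta))$, which is handled by the density-ratio argument above.
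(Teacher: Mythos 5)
Your proposal is correct and follows essentially the same route as the paper: condition on $\Theta=\theta$, bound the sub-Gaussian norm of each coordinate of $\wtilde{Y}^k_L$ by $C\sigma$ using property \ref{P2}, apply Hoeffding's inequality, then combine with Lemma \ref{lemma_bound_theta} and the monotonicity bound $\overline{f}^k_{0,\theta}\le\overline{f}^k_{0,3/4}$. The only cosmetic difference is that you control $\norm{f_k(\unif(0,\theta))}_{\psi_2}$ via a density-ratio/tail-comparison argument, whereas the paper's appendix lemma does it with a direct moment-generating-function and Jensen argument; both yield the same constant-factor bound.
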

  \vspace{-0.39cm}
  \begin{proof}
  For ease of notation,
    we will fix $\theta \in [\frac{1}{4}, \frac{3}{4}]$
  and let the random variables
  $\widehat{X}^k_L$ and $\widehat{Y}^k_L$
  denote
  $\wtilde{X}^k_L | \Theta=\theta$
  and 
  $\wtilde{Y}^k_L | \Theta=\theta$
  respectively.
  Recall that for all $j$, $f_k(X^j_i)$ was sub-Gaussian with parameter
  $\sigma_j$ by assumption. 
  It is straightforward to show (see 
  the Appendix) that this means
  $f_k(\widehat{X}^k_{L, i}) - \expct{f(\widehat{X}^k_{L, i})}$ is also sub-Gaussian with 
  norm at most $C \cdot \sigma_j^2$. Thus,
  \begin{align*}
    \norm{\widehat{Y}^k_{L, i}
    }_{\psi_2} 
    &=
    \norm{
      f_k(\widehat{X}^k_{L, i})
      + \sum_{j\ne k} f_j(X^j_{\tau^k\circ\tilde{\tau}(i)}) 
      + W_{\tau^k\circ\tilde{\tau}(i)}
    }_{\psi_2}^2
    \\&\overset{(i)}{=}
    \norm{
      f_k(\widehat{X}^k_{L, i})
      + \sum_{j\ne k} f_j(X^j_i) 
      + W_i
    }_{\psi_2}^2
    \\&\overset{(ii)}{\le}
    \norm{f_k(\widehat{X}^k_{L, i})}_{\psi_2}^2
    + \sum_{j\ne k} \norm{f_j(X^j_i)}_{\psi_2}^2
    + \norm{W_i}_{\psi_2}^2
    \\&\le C \cdot \sigma_{k}^2
    + \sum_{j\ne k}\sigma_j^2 + \sigma_{w}^2
    \le C \cdot \sigma^2.
  \end{align*}
  In the above analysis,
  $(i)$ follows
  from
  the independence assumption of
  $(\widehat{X}^k_{L}, X^{j\ne k}, W)$
  together with the i.i.d assumption on
  $(X^{j\ne k}_i, W_i)$.
  As for $(ii)$, it follows from \ref{P2} together with the independence assumption of $(\widehat{X}^k_{L}, X^{j\ne k}, W)$.
  Property \ref{P3} further implies that
  $ \norm{\widehat{Y}^k_{L, i} - \expct{\widehat{Y}^k_{L, i}} }_{\psi_2}^2 $
  is upper bounded by $C \cdot \sigma^2$, proving the first Equation in the Lemma.

  Now, using Hoeffding's inequality, we obtain
  \begin{align*}
    \Pr \left[
      \expcthat{\widehat{Y}^k_{L}} - \expct{\widehat{Y}^k_{L, i}} \ge g_k/8 
    \right]
    \le 2e^{-c \cdot n \cdot \frac{g_k^2}{\sigma^2}}.
  \end{align*}
  Using Lemma \ref{lemma_bound_theta} with
  $\Pr(A)\le \Pr(B) + \Pr(A|B^C)$ for
  any two events $A, B$, we obtain
  \begin{align*}
    \Pr \left[
      \expcthat{Y^k_{L}} - \expct{\widehat{Y}^k_{L, i}} \ge g_k/8 
    \right]
    \le 2e^{-c\dot n} + 2e^{-c \cdot n \cdot \frac{g_k^2}{\sigma^2}}.
  \end{align*}
  Note however that
  $\expct{\widehat{Y}^k_{L, i}} = \overline{f}^k_{0, \theta}$
  which as we show in the appendix,
  can further be upper bounded by
  $\overline{f}^k_{0, \frac{3}{4}}$,
  concluding the proof.
  \end{proof}
  \noindent Using the symmetry of the decision tree algorithm,
  we can further obtain that
  \begin{align}
    \Pr\left[\expcthat{{Y}^k_R} \ge
    \overline{f}^k_{\frac{1}{4}, 1} - g_k/8\right]
    \le 
    2e^{-c \cdot n} + 2e^{-c \cdot n \cdot \frac{g_k^2}{\sigma^2}}
    \label{eq_jan9_1516}
  \end{align}
  from
  \eqref{eq_jan9_1500}
  with the change of variable $X^k \to -X^k$ and
  $f_k=-f_k$.
  Taking union bound over
  \eqref{eq_jan9_1500} and \eqref{eq_jan9_1516},
  it follows that with probability at least
  $1 - 4e^{-c \cdot n} - 4e^{-c \cdot n \cdot \frac{g_k^2}{\sigma^2}}$,
  \begin{align*}
      \expcthat{Y^k_{R}} - \expcthat{Y^k_{L}}
      \ge 
      \overline{f}^k_{\frac{1}{4}, 1} - \overline{f}^k_{0, \frac{3}{4}}
      - g_k/4.
  \end{align*}
  As we show 
  in the appendix however,
  a simple application of
  conditional expectations implies
  $\overline{f}^k_{\frac{1}{4}, 1} - \overline{f}^k_{0, \frac{3}{4}} \ge g_k/3$.
  Therefore,
  with probability at least
  $1 - 4e^{-c \cdot n} - 4e^{-c \cdot n \cdot \frac{g_k^2}{\sigma^2}}$, we have
  $
      \expcthat{Y^k_{R}} - \expcthat{Y^k_{L}}
      \ge \frac{g_k}{12}.
  $
  Assuming 
  $n\ge 5$,
  we can further conclude that
  $\frac{n_L \cdot n_R}{n^2}\ge \frac{1}{5}$
  which together with \eqref{mdi_identity}, proves the lemma.

\section{Experimental Results} \label{exp_results}
\begin{figure*}[ht]
    \center{\includegraphics[width=\textwidth]{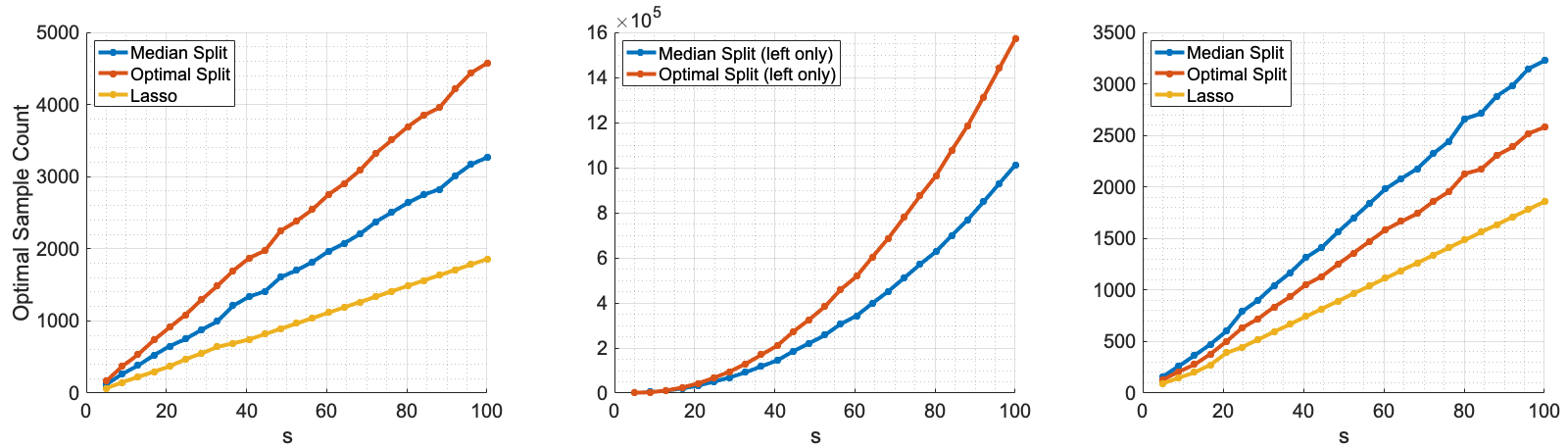}}
    \caption{Optimal sample count to recover 95\% of the active features where design matrix samples i.i.d from $U(-1,1)$ or $\mathcal{N}(0,1)$ with additive Gaussian noise $\mathcal{N}(0,0.1)$, comparing three methods: \dstump with optimal split, \dstump with median split, and \lasso.}
    \label{fig:exp_results}
\end{figure*}

In this section, we provide further justification of our theoretical results in the form of simulations on the finite sample count for active feature recovery under different regimes, as well as the predictive power of a single sub-tree as compared to the full tree. We additionally contrast \dstump with the widely studied optimal \lasso.

We first validate the result of Theorem \ref{thm_linear} and consider the
linear design with $p=200$ and design matrix entries sampled i.i.d. from
$U(0,1)$ with additive Gaussian noise $\mathcal{N}(0,.1)$. Concretely, we
examine the optimal number of samples required to recover approximately $95\%$
of the active features $s$. This is achieved by conducting a binary search on
the number of samples to find the minimal such value that recovers the desired
fraction of the active feature set, averaged across 25 independent
replications. In the leftmost plot of Figure \ref{fig:exp_results}, we plot the
sample count as a function of varying sparsity levels $s \in [5,100]$ for
\dstump with a median split, \dstump with the optimal split, as well as \lasso
for bench-marking (with penalty parameter selected through standard
cross-validation). By fixing $p$, we are evaluating the dependence of $n$ on
the sparsity level $s$ alone. The results here validate the theoretical $\mO(s
\log p)$ bound that nearly matches the optimal \lasso. Also of note, the number
of samples required by the median splitting is less than that of the optimal.
Thus, in the linear setting, we see that \dstump with median splitting is both
more simplistic and computationally inexpensive. This optimal bound result is
repeated with
Gaussian data samples in the right most plot of Figure \ref{fig:exp_results}.
Notably, in this setting the optimal split decision stumps perform better than
the median as it demonstrates their varied utility under different problem
contexts.

We additionally reiterate that the prior work of
\cite{Kazemitabar2017} attempted to simplify the analysis of
the sparse recovery problem using \dstump by examining only the left sub-tree,
which produced the non-optimal $\mO(s^2 \log p)$ finite sample bound.
To analyze the effect of this choice, the middle plot of Figure
\ref{fig:exp_results} presents the optimal sample recovery count when using
only the left sub-tree subject to the additive model of Theorem
\ref{thm_linear}. In accordance with our expectation and previous literature's
analysis, we see a clear quadratic relationship between $n$ and $s$ when fixing
the feature count $p$.

Overall, these simulations further validate the practicality and predictive
power of decisions stumps. Benchmarked against the optimal \lasso, we see a
slight decrease in performance but a computational reduction
and analytic simplification.

\section{Conclusion} \label{conclusion}
In this paper, we presented a simple and consistent feature selection algorithm
in the regression case with single-depth decision trees, and derived the
finite-sample performance guarantees in a high-dimensional sparse system. Our
theoretical results demonstrate that this very simple class of weak learners is
nearly optimal compared to the gold standard \lasso.
We
have provided strong theoretical evidence for the success of binary decision
tree based methods in practice and provided a framework on which to extend the
analysis of these structures to arbitrary height, a potential direction for
future work.

\section{Acknowledgements}
The work is partially support by DARPA QuICC, NSF AF:Small \#2218678, and  NSF AF:Small \# 2114269. 
Max Springer was supported by the National Science Foundation Graduate Research Fellowship Program under Grant No. DGE 1840340.
Any opinions, findings, and conclusions or recommendations expressed in this
material are those of the author(s) and do not necessarily reflect the views of
the National Science Foundation.

\bibliography{main}

\ifappendix
\appendix 
\onecolumn
\newpage
\section{Appendix}

\subsection{Extension to non-uniform distributions}
In this section, we extend our main result, i.e., 
Theorem \ref{thm_additive_monotonic},
to non-uniform distributions.
\begin{theorem}\label{thm_general_dist}
  Assume that the entries of the input matrix
  $X$ are sampled independently, the entries in each column are i.i.d. 
  and $Y = \sum_k f_k(X^k) + W$
  where $W_i$ are
  sampled i.i.d from $\mathcal{N}(0, \sigma_{w}^2)$.
  Assume further that
  each $f_k$ is monotone and 
  $f_k(\bX_{i, k})$ is sub-Gaussian with sub-Gaussian norm
  $\norm{f_k(\bX_{i, k})}_{\psi_2}^2 \le \sigma_{k}^2$.
  ~\\
  Let $F_k$ denote the CDF of the distribution of $X^k_i$.
  Define $F_k^{-1, +}(z) := \inf \{t: F_k(t) > z \}$
  and define $h_k := f_k \circ F_k^{-1, +}$.
  For $k\in S$,
  define $g_k$ as 
  \begin{align}
    g_k := \expct{h_k(\unif(\frac{1}{2}, 1))} - \expct{h_k(\unif(0, \frac{1}{2}))},
    \label{eq:g_k_2}
  \end{align}
  and define
  $\sigma^2$ as $\sigma^2 = \sigma_w^2 + \sum_k \sigma_k^2$.
  Algorithm \ref{alg:stump} correctly recovers the set $S$ with probability at least
$$
    1 - 4se^{-cn} - 4pe^{-cn\frac{\min_{k} g_k^2}{\sigma^2}}
$$
  for the median split and
$$
    1 - 4se^{-cn} - 4npe^{-cn\frac{\min_{k} g_k^2}{\sigma^2}}
$$
  for the optimal split.
\end{theorem}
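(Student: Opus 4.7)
The plan is to reduce the general case to the uniform case already handled by Theorem \ref{thm_additive_monotonic} via a column-wise monotone reparameterization using the CDFs $F_k$. The two ingredients are (i) that the \dstump algorithm depends on each column of $\bX$ only through the ordering of its entries, and (ii) that the substitution $X^k_i = F_k^{-1,+}(Z^k_i)$ with $Z^k_i \sim \unif(0,1)$ preserves the joint distribution of $(\bX, Y)$ up to this ordering.

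First I would define $Z_{ij} := F_j(X_{ij})$ and observe that, since $F_j$ is nondecreasing, for every column $k$ the permutation $\tau^k$ in Line 3 of Algorithm \ref{alg:stump} is the same whether we sort $X^k$ or $Z^k$ (with the same tie-breaking). In particular, the sorted vectors $Y^k_L$ and $Y^k_R$ computed by the algorithm are identical on the two inputs, so $\dimp_k$ and the resulting feature ranking produced by \dstump on $(\bX, Y)$ equal those produced on $(\bZ, Y)$. Thus recovery on $(\bX, Y)$ succeeds iff recovery on $(\bZ, Y)$ succeeds.

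Next I would rewrite the regression model in terms of $\bZ$. Using the standard identity $F_k^{-1,+}(F_k(X^k_i)) \steq X^k_i$ (for continuous $F_k$ this is equality, and otherwise it holds in distribution after suitable tie-breaking, which does not affect the algorithm since tie-breaking is random), we have
\begin{align*}
Y_i \;=\; \sum_k f_k(X^k_i) + W_i \;\steq\; \sum_k f_k\!\left(F_k^{-1,+}(Z^k_i)\right) + W_i \;=\; \sum_k h_k(Z^k_i) + W_i,
\end{align*}
where $Z^k_i \sim \unif(0,1)$ i.i.d. and the $Z^k$ remain independent across $k$. The composition $h_k = f_k \circ F_k^{-1,+}$ is monotone because $F_k^{-1,+}$ is nondecreasing and $f_k$ is monotone. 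Moreover, $h_k(Z^k_i)$ has the same distribution as $f_k(X^k_i)$, so its sub-Gaussian norm is bounded by $\sigma_k$ by hypothesis. The quantity $g_k$ in \eqref{eq:g_k_2} is exactly the uniform-case gap $\expct{h_k(\unif(\tfrac12, 1))} - \expct{h_k(\unif(0, \tfrac12))}$ appearing in Theorem \ref{thm_additive_monotonic} when applied to $(\bZ, Y)$ with functions $h_k$.

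Finally, I would invoke Theorem \ref{thm_additive_monotonic} on the transformed data: the hypotheses (uniform columns, monotone univariate link functions, sub-Gaussian outputs with the same $\sigma_k$ and $\sigma_w$) are all met, so \dstump correctly recovers $S$ on $(\bZ, Y)$ with exactly the probability stated in the theorem. Combining with the first step, the same probability bound transfers to $(\bX, Y)$. The only real obstacle is the minor measure-theoretic issue when some $F_k$ is not continuous (so $F_k(X^k_i)$ is not exactly uniform and $F_k^{-1,+} \circ F_k$ is not the identity); I would handle this by noting that the random tie-breaking in the sort step is equivalent to a vanishing perturbation of $X^k$, which makes $F_k$ continuous without changing the law of the algorithm's output, so the reduction still goes through.
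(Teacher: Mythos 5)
Your reduction is, at its core, the same as the paper's: exploit the invariance of \dstump under columnwise monotone reparameterization, pass to uniform columns with link functions $h_k = f_k \circ F_k^{-1,+}$, and invoke Theorem \ref{thm_additive_monotonic}. For columns with \emph{continuous} $F_k$ your version is complete: $Z^k_i = F_k(X^k_i)$ is exactly $\unif(0,1)$, $F_k^{-1,+}(F_k(X^k_i)) = X^k_i$ almost surely (flat pieces of $F_k$ carry no mass), so $Y_i = \sum_k h_k(Z^k_i) + W_i$ holds pointwise, the sorting permutations agree, and the bound transfers verbatim.

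The gap is in the general case the theorem is stated for, namely columns whose distributions have atoms. There the probability integral transform you chose goes the wrong way: $F_k(X^k_i)$ is \emph{not} uniform, and $F_k^{-1,+}(F_k(x))$ can differ from $x$ on a set of positive probability (e.g.\ a two-point distribution), so neither $Z^k \sim \unif(0,1)$ nor $f_k(X^k_i) = h_k(Z^k_i)$ holds, and your displayed $\steq$ step is not available as a joint statement about $(\bZ, Y)$. Your proposed patch --- a ``vanishing perturbation of $X^k$'' that ``makes $F_k$ continuous'' --- does not close this: perturbing $X^k$ changes $F_k$, hence changes the link function and the gap $g_k$ appearing in the bound, so you would need an additional limiting argument relating the perturbed $g_k^{\epsilon}$ to the $g_k$ of \eqref{eq:g_k_2}; random tie-breaking in the sort alone does not make $F_k(X^k_i)$ uniform. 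The paper sidesteps all of this by running the coupling in the opposite direction: take latent i.i.d.\ $Z^k_i \sim \unif(0,1)$ and \emph{define} $X^k_i = F_k^{-1,+}(Z^k_i)$, which has the correct law for arbitrary $F_k$; then $f_k(X^k_i) = h_k(Z^k_i)$ holds exactly, and the only thing to check is that $\mathrm{argsort}(Z^k)$ has the same distribution as $\mathrm{argsort}(X^k)$ under random tie-breaking (strict inequalities in $X^k$ force the same order in $Z^k$, and ties are symmetric). Equivalently, you could repair your direction with the randomized (distributional) transform $Z^k_i = F_k(X^{k-}_i) + V_i\bigl(F_k(X^k_i) - F_k(X^{k-}_i)\bigr)$, $V_i \sim \unif(0,1)$, which restores uniformity and $h_k(Z^k_i) = f_k(X^k_i)$ a.s.; as written, however, the atomic case is not proved.
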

  \begin{proof}
  We first claim that if $Z\sim \unif(0, 1)$, then $F_{k}^{-1, +}(Z)$
  has the same distribution as $X^k_{i}$. Formally,
  for all $z\in [0, 1]$,
  \begin{align*}
    F_{k}^{-1, +}(z) \le t
    \iff
    \inf \{\tilde{t}: F_k(\tilde{t}) > z \} \le t
    &\overset{(i)}{\iff}
    \forall \delta > 0: \exists \tilde{t} \le t + \delta: F_{k}(\tilde{t}) > z
    \\&\overset{(ii)}{\iff}
    \lim_{\epsilon \to 0, \epsilon > 0} F_k(t + \epsilon) > z
    \\&\overset{(iii)}{\iff}
    F_k(t) > z.
  \end{align*}
  where $(i)$ follows from the definition of $\inf$,
  $(ii)$ follows from the fact that $F_k$ is increasing and $(iii)$
  follows from the fact that $F_k$ is right continuous.
  Therefore,
  \begin{align*}
    \Pr \left(F_k^{-1, +}(Z) \le t\right)
    &=
    \Pr \left(Z < F_k(t)\right) \overset{(i)}=F_k(t).
  \end{align*}
  where for $(i)$ we have used the fact that $Z$
  is sampled uniformly on $[0, 1]$. 
  Since $F_k(t) = \Pr(X^k_i \le t)$, the claim is proved.

  Given this result, we can assume that $X^k_{i}$ were
  set as
  $F_k^{-1, +}(Z^k_i)$ for a latent variable
  $Z^k_i \sim \unif(0, 1)$. While this means $Z^k_i$ is no longer a \emph{function} of $X^k_i$,
  $\text{argsort}(Z^k)$
  still has the same distribution as
  $\text{argsort}(X^k)$.
  To see why, consider a fixed value of $X^k$
  and sample $Z^k$ based on the conditional distribution.
  If $X^k_{i} < X^k_{j}$ for some
  $i, j$, then $Z^k_{i} < Z^k_j$ as well
  since $F^k_{-1, +}$ is a function. As for the case
  case of $X^k_{i} = X^k_{j}$, then
  $Z^k_{i} < Z^k_{j}$ and $Z^k_{i} > Z^k_j$ are equi-probable which is consistent with the assumption that $\text{argsort}$ breaks ties randomly.
  ~\\
  We now note that
  \begin{align*}
    f_k(X^k_i) = f_k(F_k^{-1, +}(Z^k_i)) = h_k(Z^k_i)
    .
  \end{align*}
  Since $\text{argsort}(Z^k)$ and $\text{argsort}(X^k)$ have the same distribution,
  Algorithm \ref{alg:stump}
  has the same output on $(\bX, f_k)$ as $(\bZ, h_k)$.
  Invoking Theorem \ref{thm_additive_monotonic} therefore completes the proof.

  \end{proof}
\subsection{Auxiliary lemmas}
\begin{lemma}\label{lemma_sub_Gaussian_f_delta}
  Let $U \sim \unif(0, 1)$ be uniform random variable on $[0, 1]$
  and
  Let $f$ be a
  function such that $f(U)$ is sub-Gaussian with
  norm $\norm{f(U)}_{\psi_2}$.
  Let $\delta$ be a value in the range $[\frac{1}{4}, \frac{3}{4}]$,
  and let $\tilde{U} \sim \unif(0, \delta)$ be a random variable
  on $[0, \delta]$.
  Then $f(U)$ is sub-Gaussian as well
  with norm at most a constant times that of
  $f(U)$, i.e,
  \begin{align*}
    \norm{f(\tilde{U})}_{\psi_2} \le C \cdot \norm{f(U)}_{\psi_2} 
  \end{align*}
\end{lemma}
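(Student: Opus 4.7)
The plan is to reduce everything to the defining inequality of the sub-Gaussian norm, exploiting the observation that $f(\tilde{U})$ has the same law as $f(U)$ conditioned on the event $\{U \le \delta\}$, which has probability $\delta \ge 1/4$.

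First, I would write
$$\expct{e^{(f(\tilde{U})/t)^2}} = \frac{1}{\delta}\int_0^\delta e^{(f(u)/t)^2}\,du \le \frac{1}{\delta}\int_0^1 e^{(f(u)/t)^2}\,du = \frac{1}{\delta}\expct{e^{(f(U)/t)^2}}.$$
Plugging in $t = \norm{f(U)}_{\psi_2}$ makes the rightmost expectation at most $2$ by definition of the sub-Gaussian norm, so the left-hand side is bounded by $2/\delta \le 8$. This is not yet small enough to certify that $t$ is an admissible scale for $f(\tilde{U})$ (we need the expectation to be at most $2$), so the extra factor of $4$ has to be absorbed into a constant inflation of the scale.

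To do this, I would inflate $t$ to $ct$ for some $c > 1$ and appeal to Jensen's inequality applied to the concave map $x \mapsto x^{1/c^2}$. Since $e^{a/c^2} = (e^a)^{1/c^2}$,
$$\expct{e^{(f(\tilde{U})/(ct))^2}} = \expct{\bigl(e^{(f(\tilde{U})/t)^2}\bigr)^{1/c^2}} \le \bigl(\expct{e^{(f(\tilde{U})/t)^2}}\bigr)^{1/c^2} \le 8^{1/c^2}.$$
Choosing $c = \sqrt{3}$ yields $8^{1/c^2} = 8^{1/3} = 2$, so $ct = \sqrt{3}\,\norm{f(U)}_{\psi_2}$ is an admissible sub-Gaussian scale for $f(\tilde{U})$, giving the claim with $C = \sqrt{3}$.

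I do not anticipate any real obstacle here. The only subtle point is recognizing that one cannot directly plug $t = \norm{f(U)}_{\psi_2}$ into the $\psi_2$-definition for $f(\tilde{U})$, since the $1/\delta$ blow-up pushes $\expct{e^{(f(\tilde{U})/t)^2}}$ past the threshold $2$. The device of scaling $t$ by a constant and using Jensen to pull the exponent $1/c^2$ outside is the standard trick for converting a weak MGF estimate of the form $\expct{e^{Z^2/t^2}} \le K$ into a genuine sub-Gaussian norm bound, and here it loses only a factor of $\sqrt{\log_2 K} = \sqrt{3}$.
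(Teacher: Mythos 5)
Your proof is correct and follows essentially the same route as the paper's: restrict the moment generating function bound to the event $\{U \le \delta\}$ to get $\expct{e^{(f(\tilde{U})/t)^2}} \le 2/\delta$ at $t = \norm{f(U)}_{\psi_2}$, then absorb the excess via Jensen's inequality applied to a concave power after inflating the scale by a constant. The only cosmetic difference is that you use $\delta \ge 1/4$ up front to obtain the clean universal constant $C=\sqrt{3}$, while the paper keeps $\delta$ general and obtains a $\delta$-dependent (but bounded) constant.
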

\begin{proof}
  Define $K:=\norm{f(U)}_{\psi_2}$,
  \begin{align}
    \notag
    2 &\overset{(i)}{\ge} 
    \expct{e^{\frac{f(U)^2}{K^2}}}
    \\&=
    \notag
    \expct{e^{\frac{f(U)^2}{K^2}} \Big| U \le \delta}
    \cdot \Pr \left[ U \le \delta \right]
    +
    \expct{e^{\frac{f(U)^2}{K^2}} \Big| U > \delta}
    \cdot \Pr \left[ U > \delta \right]
    \\&\overset{(ii)}{\ge}
    \notag
    \expct{e^{\frac{f(U)^2}{K^2}} \Big| U \le \delta}
    \cdot \Pr \left[ U \le \delta \right]
    \\&=
    \expct{e^{\frac{f(\tilde{U})^2}{K^2}}}
    \cdot
    \delta
    \label{eq_jan1_1640}
  \end{align}
  where $(i)$ follows from the definition of the sub-Gaussian norm
  and $(ii)$ follows from the fact that $e^{x} \ge 0$ for
  all $x$.
  Define $\wtilde{K} := K \cdot \frac{ln\left(2/\delta\right)}{ln(2)}$.
  Note that $\delta < 1$ and therefore $\frac{K}{\wtilde{K}} < 1$.
  We can therefore obtain.
  \begin{align*}
    \expct{e^{\frac{f(\tilde{U})^2}{\wtilde{K}^2}}}
    &=
    \expct{e^{\frac{f(\tilde{U})^2}{K^2} \cdot \frac{K^2}{\wtilde{K}^2}}}
    \\&=
    \expct{\left(e^{\frac{f(\tilde{U})^2}{K^2}}\right)^{\frac{K^2}{\wtilde{K}^2}}}
    \\&\overset{(i)}{\le}
    \expct{\left(e^{\frac{f(\tilde{U})^2}{K^2}}\right)}^{\frac{K^2}{\wtilde{K}^2}}
    \\&\overset{(ii)}{\le}
    \left(2/\delta\right)^{\frac{K^2}{\wtilde{K}^2}}
    \\&\overset{(iii)}{=}
    e^{\ln(2/\delta) \cdot \frac{ln(2)}{ln(2/\delta)}}
    =2
  \end{align*}
  where for $(i)$ we have used the Jensen's inequality for
  the concave function $t\to t^{\frac{K^2}{\wtilde{K}^2}}$,
  for $(ii)$ we have used \eqref{eq_jan1_1640}
  and for $(iii)$ we have used the definition of $\wtilde{K}$.
  Therefore the claim is proved with $C=\frac{ln\left(2/\delta\right)}{ln(2)}$.
\end{proof}
\eat{
\begin{lemma}
  For any $0 \le a \le c \le b \le 1$,
  \begin{align*}
    \overline{f}^k_{a, b} = 
    \frac{c - a}{b - a} \cdot \overline{f}^k_{a, c} 
    +
    \frac{b - c}{b - a} \cdot \overline{f}^k_{c, b} 
  \end{align*}
\end{lemma}
\begin{proof}
  Let $Z$ be a random variable distributed as $\unif(a, b)$.
  By law of total expectation,
  \begin{align*}
    \overline{f}^k_{a, b} &= 
    \expct{f_k(Z)}
    \\&=
    \Pr\left({Z < c}\right) \cdot \expct{f_k(Z) | Z < c} 
    + 
    \Pr\left({Z \ge c}\right) \cdot \expct{f_k(Z) | Z \ge c} 
    \\&=
    \frac{c - a}{b - a} \cdot \expct{f_k(\unif(a, c))}
    +
    \frac{b - c}{b - a} \cdot \expct{f_k(\unif(c, 1))}
    \\&=
    \frac{c - a}{b - a} \cdot \overline{f}^k_{a, c}
    +
    \frac{b - c}{b - a} \cdot \overline{f}^k_{c, b}
  \end{align*}
\end{proof}
}
\begin{lemma}\label{f_k_increasing}
  If $f_k$ is increasing, then
  for any $0 \le a \le c \le b \le 1$,
  $\overline{f}^k_{a, c} \le \overline{f}^k_{c, b}$.
\end{lemma}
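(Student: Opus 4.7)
The plan is to use the simple observation that the two uniform distributions are separated by the common endpoint $c$, so that pointwise monotonicity of $f_k$ transfers directly to an inequality in expectation. Specifically, I would let $U \sim \unif(a, c)$ and $V \sim \unif(c, b)$ be (for concreteness, independent) random variables, noting that $U \le c \le V$ holds almost surely by the definition of the uniform distribution on these intervals.

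From there, monotonicity of $f_k$ gives $f_k(U) \le f_k(c) \le f_k(V)$ almost surely, and taking expectations of both ends yields
\[
\overline{f}^k_{a, c} = \expct{f_k(U)} \le f_k(c) \le \expct{f_k(V)} = \overline{f}^k_{c, b},
\]
which is the desired inequality. An equivalent route that avoids introducing auxiliary random variables is to use the integral representation $\overline{f}^k_{a, c} = \frac{1}{c - a} \int_a^c f_k(x)\, dx$ and bound the integrand on $[a, c]$ above by $f_k(c)$ (and below by $f_k(c)$ on $[c, b]$) using monotonicity; both approaches are one-line arguments.

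There is essentially no analytic obstacle here, so the only care required is handling degenerate ranges. If $a = c$ or $c = b$, then one of $\overline{f}^k_{a, c}$ or $\overline{f}^k_{c, b}$ is formally undefined because the corresponding uniform distribution collapses to a point mass; in these cases the statement should be interpreted by taking the natural limit, $\overline{f}^k_{c, c} := f_k(c)$, and the inequality reduces to $f_k(c) \le \overline{f}^k_{c, b}$ or $\overline{f}^k_{a, c} \le f_k(c)$, each of which follows by the same monotonicity bound applied to the single non-degenerate interval. No measurability assumption beyond what is already implicit in defining $\overline{f}^k_{a, b}$ is needed, since a monotone function on $[0, 1]$ is automatically Borel measurable and integrable on any subinterval when the corresponding expectation is already assumed finite earlier in the paper.
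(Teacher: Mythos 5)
Your argument is correct and is essentially identical to the paper's own proof, which likewise sandwiches via $\overline{f}^k_{a,c} \le f_k(c) \le \overline{f}^k_{c,b}$ using monotonicity of $f_k$ and the fact that $\unif(a,c)$ and $\unif(c,b)$ are supported on opposite sides of $c$. The additional remarks on degenerate intervals and measurability are harmless but not needed beyond what the paper implicitly assumes.
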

\begin{proof}
  \begin{align*}
    \overline{f}^k_{a, c}
    =
    \expctu{Z \sim \unif(a, c)}{f_k(Z)}
    \le
    f_k(c)
    \le
    \expctu{Z \sim \unif(c, b)}{c}
    =
    \overline{f}^k_{c, b}
  \end{align*} 
\end{proof}
\begin{lemma}
  If $f_k$ is increasing, then
  for any $0 \le a \le c \le b \le 1$,
  $\overline{f}^k_{a, c} \le \overline{f}^k_{a, b}$.
\end{lemma}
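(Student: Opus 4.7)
The plan is to reduce this lemma to the previous one (Lemma \ref{f_k_increasing}) by splitting the interval $[a,b]$ at the point $c$ and using the law of total expectation. The key observation is that a uniform draw on $[a,b]$ can be simulated by first flipping a biased coin to decide whether to sample from $[a,c]$ or from $[c,b]$, with probabilities proportional to the widths of the two sub-intervals.

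Concretely, assuming the non-degenerate case $a < c < b$ (the cases $a=c$ or $c=b$ are immediate), I would let $Z \sim \unif(a,b)$ and condition on the event $\{Z \le c\}$ versus $\{Z > c\}$. Since $\Pr(Z \le c) = \frac{c-a}{b-a}$, and conditionally $Z \mid Z \le c \sim \unif(a,c)$ and $Z \mid Z > c \sim \unif(c,b)$, the law of total expectation yields the convex-combination identity
\begin{align*}
\overline{f}^k_{a,b} \;=\; \frac{c-a}{b-a}\,\overline{f}^k_{a,c} \;+\; \frac{b-c}{b-a}\,\overline{f}^k_{c,b}.
\end{align*}
From here I would invoke Lemma \ref{f_k_increasing}, which gives $\overline{f}^k_{a,c} \le \overline{f}^k_{c,b}$ since $f_k$ is increasing. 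Because a convex combination of two real numbers is always at least as large as the smaller of the two, the right-hand side is bounded below by $\overline{f}^k_{a,c}$, yielding $\overline{f}^k_{a,b} \ge \overline{f}^k_{a,c}$ as desired.

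There is essentially no obstacle here: the result is a one-line consequence of Lemma \ref{f_k_increasing} once the convex-combination identity is in hand. The only thing to be mindful of is handling the degenerate endpoint cases cleanly (e.g.\ $a=c$, where $\overline{f}^k_{a,c}$ should be interpreted as $f_k(a)$, which is trivially bounded by the average of an increasing function on any interval starting at $a$), and making sure that the monotonicity hypothesis is invoked with the right orientation. The argument does not require continuity or differentiability of $f_k$, only monotonicity, which matches the standing assumption of Theorem \ref{thm_additive_monotonic}.
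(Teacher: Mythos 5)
Your proof is correct and follows essentially the same route as the paper: both decompose $\overline{f}^k_{a,b}$ via the law of total expectation into the convex combination $\frac{c-a}{b-a}\,\overline{f}^k_{a,c} + \frac{b-c}{b-a}\,\overline{f}^k_{c,b}$ and then lower-bound it by $\overline{f}^k_{a,c}$ using Lemma \ref{f_k_increasing}. Your additional remarks on the degenerate cases $a=c$ or $c=b$ are a harmless refinement of the paper's argument.
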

\begin{proof}
  \newcommand{\prob}[1]{\Pr\left[ #1 \right]}
  \begin{align*}
    \overline{f}^k_{a, b}
    =
    \expctu{Z \sim \unif(a, b)}{f_k(Z)}
    &=
    \prob{Z \in [a, c]} \cdot \expctu{Z \sim \unif(a, c)}{f_k(Z)}
    + 
    \prob{Z \in [c, b]} \cdot \expctu{Z \sim \unif(c, b)}{f_k(Z)}
    \\&=
    \prob{Z \in [a, c]} \cdot 
    \overline{f}^k_{a, c}
    + 
    \prob{Z \in [c, b]} \cdot
    \overline{f}^k_{c, b}
    \\&\ge
    \overline{f}^k_{a, c}
  \end{align*}
\end{proof}

\begin{lemma}\label{lemma_f_1_4_3_4_1_2}
  \begin{align*}
      \overline{f}^k_{\frac{1}{4}, 1} - \overline{f}^k_{0, \frac{3}{4}} \ge g_k/3
  \end{align*}
\end{lemma}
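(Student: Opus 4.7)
The plan is to reduce everything to a computation on four equal subintervals and then invoke monotonicity of $f_k$. Partition $[0,1]$ into the quarters $[0,\tfrac14]$, $[\tfrac14,\tfrac12]$, $[\tfrac12,\tfrac34]$, $[\tfrac34,1]$, and set
\[
a := \overline{f}^k_{0, 1/4}, \quad
b := \overline{f}^k_{1/4, 1/2}, \quad
c := \overline{f}^k_{1/2, 3/4}, \quad
d := \overline{f}^k_{3/4, 1}.
\]
Because $f_k$ is assumed increasing (the standing assumption of Lemma \ref{lemma_S_is_important}), Lemma \ref{f_k_increasing} immediately gives $a \le b \le c \le d$. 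This monotone ordering is the whole arithmetic content we will need.

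Next I would apply the law of total expectation to every $\overline{f}^k_{\cdot,\cdot}$ appearing in the statement. Since $\unif(p,q)$ restricted to a sub-interval of equal length becomes a uniform on that sub-interval, each $\overline{f}^k_{p,q}$ whose endpoints are multiples of $\tfrac14$ is simply the plain average of the corresponding $a, b, c, d$. Concretely,
\[
\overline{f}^k_{0, 1/2} = \tfrac{a+b}{2}, \quad
\overline{f}^k_{1/2, 1} = \tfrac{c+d}{2}, \quad
\overline{f}^k_{0, 3/4} = \tfrac{a+b+c}{3}, \quad
\overline{f}^k_{1/4, 1} = \tfrac{b+c+d}{3}.
\]
By definition of $g_k$ (with the $\unif$ ranges in \eqref{eq:g_k}), the first two identities yield $2g_k = (c+d) - (a+b) = (d-a) + (c-b)$.

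The punchline is then a single subtraction: the middle two $b$'s and $c$'s cancel, giving
\[
\overline{f}^k_{1/4, 1} - \overline{f}^k_{0, 3/4} = \tfrac{(b+c+d) - (a+b+c)}{3} = \tfrac{d-a}{3}.
\]
Monotonicity implies $d - a \ge c - b \ge 0$, so $d - a \ge \tfrac{(d-a)+(c-b)}{2} = g_k$, and the lemma follows. There is no real obstacle here; the only thing to be careful about is keeping the weights in the law-of-total-expectation decomposition correct (each quarter has length $\tfrac14$, so over an interval of length $\tfrac34$ each quarter carries weight $\tfrac13$, and over length $\tfrac12$ each carries weight $\tfrac12$), after which the key cancellation and the monotone chain $a \le b \le c \le d$ do all of the work.
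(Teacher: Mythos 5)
Your proposal is correct and follows essentially the same route as the paper: decompose the relevant expectations over the quarter intervals via the law of total expectation, observe that $\overline{f}^k_{\frac14,1}-\overline{f}^k_{0,\frac34}=\tfrac13\bigl(\overline{f}^k_{\frac34,1}-\overline{f}^k_{0,\frac14}\bigr)$, and bound $g_k$ by $\overline{f}^k_{\frac34,1}-\overline{f}^k_{0,\frac14}$ using monotonicity. Writing everything in terms of the four quarter averages $a\le b\le c\le d$ is only a notational repackaging of the paper's argument, and all the weights and cancellations check out.
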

\begin{proof}
  We have
  \begin{align*}
    \overline{f}^k_{0, \frac{3}{4}} =
    \frac{2}{3} \cdot \overline{f}^k_{\frac{1}{4}, \frac{3}{4}}
    +
    \frac{1}{3} \cdot \overline{f}^k_{0, \frac{1}{4}}
    ,
  \end{align*}
  and
  \begin{align*}
    \overline{f}^k_{\frac{1}{4}, 1} =
    \frac{2}{3} \cdot \overline{f}^k_{\frac{1}{4}, \frac{3}{4}}
    +
    \frac{1}{3} \cdot \overline{f}^k_{\frac{3}{4}, 1}
    .
  \end{align*}
  It therefore follows that
  \begin{align*}
    \overline{f}^k_{\frac{1}{4}, 1} - \overline{f}^k_{0, \frac{3}{4}} = \frac{1}{3} \cdot
    \left(
    \overline{f}^k_{\frac{3}{4}, 1}
    -
    \overline{f}^k_{0, \frac{1}{4}}
    \right)
  \end{align*}
  Note however that 
  \begin{align*}
    g_k = 
    \overline{f}^k_{\frac{1}{2}, 1}
    - 
    \overline{f}^k_{0, \frac{1}{2}}
    &=
    \left(
    \frac{1}{2} \cdot \overline{f}^k_{\frac{1}{2}, \frac{3}{4}}
    +
    \frac{1}{2} \cdot \overline{f}^k_{\frac{3}{4}, 1}
    \right)
    -
    \left(
    \frac{1}{2} \cdot \overline{f}^k_{0, \frac{1}{4}}
    +
    \frac{1}{2} \cdot \overline{f}^k_{\frac{1}{4}, \frac{1}{2}}
    \right)
    \\&=
    \left(
    \frac{1}{2} \cdot \overline{f}^k_{\frac{3}{4}, 1}
    -
    \frac{1}{2} \cdot \overline{f}^k_{0, \frac{1}{4}}
    \right)
    +
    \left(
    \frac{1}{2} \cdot \overline{f}^k_{\frac{1}{2}, \frac{3}{4}}
    -
    \frac{1}{2} \cdot \overline{f}^k_{\frac{1}{4}, \frac{1}{2}}
    \right)
    \\&\le
    \overline{f}^k_{\frac{3}{4}, 1}
    - 
    \overline{f}^k_{0, \frac{1}{4}}
  \end{align*}
  which proves the claim.
\end{proof}
  
  \subsection{Proof of Corollary \ref{cor.only}}
  \label{sec.appendix.cor}
  \begin{proof}
    We need to show that 
    for $n \gtrsim s \log(p)$,
    \begin{align*}
        &\lim_{p \to \infty}
        1 
         - 4s e^{-c\cdot n}
        - 4npe^{-c \cdot n\frac{\min_{k}\beta_k^2}{\norm{\beta}_2^2 + \sigma_{w}^2}}
        = 1
    \end{align*}
    First, note that
    since $\min_{k} \beta_k^2 \in \Omega(\frac{1}{s})$
    and $\norm{\beta}_2 \in \mO(1)$,
    it follows that
    $\frac{\min_{k}\beta_k^2}{\norm{\beta}_2^2 + \sigma_w^2} \in \Omega(\frac{1}{s})$.
    Assuming that
    $\frac{\min_{k}\beta_k^2}{\norm{\beta}_2^2 + \sigma_w^2} \ge \frac{C}{s}$ for some $C$,
    set 
    $n = \frac{\max\{4/C, 2\}}{c}s\log(p)$.
    This implies that
    \begin{align*}
        4se^{-cn} \le 4\frac{s}{p^{2s}} \le \frac{4}{p}.
    \end{align*}
    which goes to zero for large $p$.
    It therefore remains to show that
    $4npe^{-cn \frac{\min_{k}\beta_k^2}{\norm{\beta}_2^2 + \sigma_w^2}}$ goes to zero
    which is equivalent to showing that
    $cn \frac{\min_{k}\beta_k^2}{\norm{\beta}_2^2 + \sigma_w^2} - \log(np)$ goes to $\infty$.
    Note however that
    \begin{align*}
        cn \frac{\min_{k}\beta_k^2}{\norm{\beta}_2^2 + \sigma_w^2} - \log(np)
        &= 
        cn \frac{\min_{k}\beta_k^2}{\norm{\beta}_2^2 + \sigma_w^2} - \log(n) - \log(p)
        \\&\ge
        c \cdot  \frac{4}{c \cdot C} s\log(p) \cdot \frac{C}{s}
        - \log(p)
        - \log(n)
        \\&=
        3\log(p) - \log(n)
    \end{align*}
    Now, observe that by our choice of $n$,
    \begin{align*}
      n \le (\frac{4}{c\cdot C} + \frac{2}{c})\cdot p\log(p) \le (\frac{4}{c\cdot C} + \frac{2}{c})p^2
    .
    \end{align*}
    Therefore,
    $\log(n) \le 2\log(p) + \log(C')$, where
    $C' := \frac{4}{c\cdot C} + \frac{2}{c}$, implying that
    \begin{align*}
       3\log(p) - \log(n) \ge 
       \log(p) - \log(C'),
    \end{align*}
    which goes to $\infty$ as large $p$.
  \end{proof}
  
  \subsection{Proof of Lemma \ref{lemma_important_is_S}}
  In this section, we provide the proof of Lemma \ref{lemma_important_is_S}, which is a generalization of Lemma 1 in \cite{Li2019}.
\begin{proof}
    Since $k'$ is independent of $Y$,
  so is the permutation $\tau^{k'}$.
  Since $Y_i$ were assumed to be i.i.d,
  this implies that $Y_{\tau^k(i)}$, and by extension $Y^k_{L, i}$,
  are i.i.d as well and have the same distribution as
  $Y_i$.
  $Y^k_i$ are zero-mean and sub-Gaussian with norm at most
  $\sigma$ however as
  \begin{align*}
    \norm{Y_i}_{\psi_2}^2
    =
    \norm{f_k(X^k_i) + \sum_{j\ne k} f_j(X^j_i) + W}_{\psi_2}^2
    &\overset{(i)}\le
    \norm{f_k(X^k_i)}_{\psi_2} + \sum_{j\ne k} \norm{f_j(X^j_i)}_{\psi_2} + \norm{W}_{\psi_2}^2
    \\&= \sigma^2.
  \end{align*}
  where $(i)$ follows from \hyperref[]{\ref{P2}}.
  Focusing on the median split,
  Hoeffding's inequality therefore implies that
  for any $t > 0$,
  $\Pr\left(\left|\expcthat{Y^k_{L}}\right| > t\right) \le 2e^{-c \cdot n_{L}t^2/\sigma^2}$
  and 
  $\Pr\left(\left|\expcthat{Y^k_{R}}\right| > t\right) \le 2e^{-c \cdot n_{R}t^2/\sigma^2}$.
  Setting $t=\frac{g_k}{30}$
  and
  a union bound for both sub-trees
  implies that
  with probability
  at least
  $1 - 4e^{-c \cdot n \cdot \frac{\min_{k} g_k^2}{\sigma^2}}$,
  \begin{align*}
      \left|
      \expcthat{Y^k_{R}} - \expcthat{Y^k_{L}}
      \right|
      \le \frac{g_k}{15}.
  \end{align*}
  Since $\frac{n_{L} \cdot n_{R}}{n^2} \le \frac{1}{4}$, the claim follows from 
  \eqref{mdi_identity}.
  
  As for the optimal split,
  the analysis needs to change as the split point is not necessarily
  in the middle and is also dependent on the data.
  For a fixed $n_L$ however, 
  the same analysis as
  above can be used for bounding $\dimp_{k, n_L}$
  with small tweaking as shown by \cite{Li2019};
  while the bound on the concentration of
  $\expcthat{Y}^k_{L}$
  is weaker since $\frac{n_{L}}{n}$ can be small,
  this is offset by the $\frac{n_L \cdot n_{R}}{n^2}$ term in
  \eqref{mdi_identity} and ultimately the same bound
  on $\dimp_{k, n_L}$ can be obtained.
  Taking a union bound over all $n$ possible splitting
  points proves the results. 
  \end{proof}
  
\subsection{Proof of Lemma \ref{lemma_bound_theta}}
\begin{lemma}
  Let the random variable $\Theta$ be distributed as $\text{Beta}(n_L + 1, n_R)$ for
  $\{n_L, n_R\}=\{ \floor{\frac{n}{2}} , \ceil{\frac{n}{2}} \}$.
  For $n \ge 5$ and $t \ge \frac{2}{3}$,
  \begin{align*}
    \Pr \left(\Theta \ge t \right) \le e^{-n(t-\frac{2}{3})^2}.
  \end{align*}
\end{lemma}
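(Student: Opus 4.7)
The plan is to convert the Beta tail probability into a Binomial tail probability via the classical order-statistic identity, and then bound the latter with Hoeffding's inequality. Recall that $\text{Beta}(n_L + 1, n_R)$ is the exact distribution of the $(n_L + 1)$-st order statistic of $n = n_L + n_R$ i.i.d.\ $\unif(0, 1)$ random variables. Hence the event $\{\Theta \ge t\}$ coincides, up to a probability-zero boundary, with the event that at most $n_L$ of the $n$ samples fall strictly below $t$, giving
\begin{align*}
  \Pr\left(\Theta \ge t\right) = \Pr\left(\text{Bin}(n, t) \le n_L\right).
\end{align*}

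Next I would apply Hoeffding's inequality for the lower tail of this Binomial. Since $n_L = \floor{n/2} \le n/2$ and $t \ge 2/3 > 1/2$, we have $n_L / n \le 1/2 < t$, so this is a genuine lower-deviation event. Hoeffding then yields
\begin{align*}
  \Pr\left(\text{Bin}(n, t) \le n_L\right) \le \exp\left(-2n\left(t - \tfrac{n_L}{n}\right)^2\right) \le \exp\left(-2n\left(t - \tfrac{1}{2}\right)^2\right).
\end{align*}

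It remains to absorb the factor of $2$ and shift the centering from $1/2$ to $2/3$, i.e., to verify that $2(t - 1/2)^2 \ge (t - 2/3)^2$ for all $t \ge 2/3$. Expanding gives $2(t - 1/2)^2 - (t - 2/3)^2 = t^2 - (2/3)t + 1/18$, whose roots are $(2 \pm \sqrt{2})/6 \approx 0.098$ and $0.569$. Since $2/3 > 0.569$, the quadratic is nonnegative on $[2/3, 1]$ and the lemma follows. The hypothesis $n \ge 5$ plays no essential role in this argument beyond ensuring $n_L \ge 1$; it is presumably inherited from the broader context in which the lemma is invoked.

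I expect no genuine obstacle here, only one point requiring minor care: Hoeffding's inequality naturally produces the exponent $2(t - 1/2)^2$, which is tighter than the target exponent but has a different centering, so we must spend the factor-of-two slack in Hoeffding to shift the centering from $1/2$ to $2/3$. The cleaner form $(t - 2/3)^2$ is presumably chosen so that instantiating at $t = 3/4$, as needed for the symmetric statement in Lemma \ref{lemma_bound_theta}, yields the transparent bound $e^{-n/144}$, which combined with the analogous lower-tail inequality (obtained by applying the same argument to $1 - \Theta \sim \text{Beta}(n_R, n_L + 1)$) recovers the $1 - 2e^{-cn}$ guarantee claimed there.
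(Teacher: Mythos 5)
Your proof is correct and follows essentially the same route as the paper: both identify $\Theta$ with the $(n_L+1)$-st order statistic of $n$ i.i.d.\ uniforms, rewrite $\{\Theta \ge t\}$ as a Binomial lower-tail event, and apply Hoeffding, the only cosmetic difference being that you spend the factor of $2$ in Hoeffding's exponent to shift the centering from $\tfrac12$ to $\tfrac23$, whereas the paper relaxes the threshold $\tfrac{n_R}{n}$ down to $\tfrac13$ before invoking Hoeffding. One small point: the statement permits $n_L = \ceil{n/2}$ (this case is actually needed, since the symmetric right-subtree argument swaps $n_L$ and $n_R$), which your step $n_L/n \le 1/2$ silently excludes, but the same computation goes through because $\ceil{n/2}/n \le 3/5$ for $n \ge 5$ and $2\left(t - \tfrac{3}{5}\right)^2 \ge \left(t - \tfrac{2}{3}\right)^2$ for all $t \ge \tfrac{2}{3}$.
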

\begin{proof}
  Let $U_1 , \dots , U_n$ be i.i.d $\unif(0, 1)$ random variables
  and let $\tau$ be their sorting permutation, i.e,
  \begin{align*}
    U_{\tau(1)} \le \dots \le U_{\tau(n)}
  \end{align*}
  It is well-known (e.g., see \cite{dukeOrderStat}) that for any $k$, $U_{\tau(k)}$
  is distributed as
  $\text{Beta}(k, n + 1 - k)$.
  Therefore, $\Theta$ has the same distribution as $U_{\tau(n_L + 1)}$.
  This means that for any $t\in [0, 1]$,
  \begin{align*}
    \Pr\left[
      \Theta \ge t
    \right]
    &=
    \Pr\left[
      U_{\tau(n_L + 1)} \ge t
    \right]
    \\&=
    \Pr\left[
      \left| 
        \{
          k: U_k < t
        \}
      \right|
      \le
      n_{L} 
    \right]
    \\&=
    \Pr\left[
      \left| 
        \{
          k: U_k \ge t
        \}
      \right|
      \ge
      n_R
    \right]
    \\&=
    \Pr\left[
      \sum_{k} \ind{U_k \ge t} \ge n_R
    \right]
    \\&=
    \Pr\left[
      \frac{1}{n}\cdot \left( \sum_{k} \ind{U_k \ge t} \right) \ge \frac{n_R}{n}
    \right]
  \end{align*}

  Defining $Z_k := \ind{U_k \ge t}$, it is clear
  that $Z_k$ are i.i.d Bernoulli random variables with parameter $1-t$.
  If $1-t \le \frac{n_{R}}{n}$,
  Hoeffding's inequality implies that
  \begin{align*}
    \Pr\left[
      \Theta \ge t
    \right]
    &=
    \Pr\left[
      \frac{1}{n}\cdot \left( \sum_{k} Z_k \right) \ge \frac{1}{3}
    \right]
    \\&\le
    2e^{-n\left(\frac{n_{R}}{n} - (1-t)\right)^2}
    \\&=2e^{-n\cdot (t - \frac{2}{3})^2}
  \end{align*}
  which completes the proof.
\end{proof}
By symmetry, we also have
$\Pr{\Theta \le t} \le e^{-n(t-\frac{1}{3})^2}$ for
any $t \le \frac{1}{3}$, which implies Lemma \ref{lemma_bound_theta} via a union bound.
\subsection{Proof of Theorem \ref{thm.unknown_s}}
\begin{proof}
  Consider the distribution of $\dimp_k$ for an inactive $k$.
  Since $k$ is inactive, the sorting permutation $\tau_k$ is independent of $\bX, Y$.
  Therefore, the distribution of $\dimp_k$ is the distribution of
  $\dimp(\tau(Y))$ where $\tau$ is a random permutation on $[n]$ and
  $\dimp(Y)$ is defined as
  in Lines \ref{line:imp_1}, \ref{line:imp_2}, \ref{line:imp_3} i.e, 
  \begin{align*}
  \dimp(Y) = 
    \frac{\floor{\frac{n}{2}}}{n}\varHat(Y_{\le \floor{\frac{n}{2}}}) + \frac{n - \floor{\frac{n}{2}}}{n}\varHat(Y_{> \floor{\frac{n}{2}}})
  \end{align*}
  for the median split and
  \begin{align*}
  \dimp(Y) = 
  \min_{n_{L}}
    \frac{n_L}{n}\varHat(Y_{\le n_{L}}) + \frac{n - n_{L}}{n}\varHat(Y_{> n_{L}})
  \end{align*}
  for the optimal split.
  In the above equations,
  $Y_{\le i}$ and $Y_{> i}$ refer to
  $(Y_{1}, \dots Y_{i})^T$ and
  $(Y_{i+1}, \dots, Y_n)^T$ respectively.
  
  Similarly, for $t\in [T]$ and $k\in [p]$,
  $\tau^{(t)}_i$ is a random permutation independent of $\bX, Y$.
  Therefore,
  $\dimp_{i}^{(t)}$ is also distributed as
  $\dimp(\tau(Y))$ for a random permutation $\tau$ on $[n]$. 
  Furthermore, all of these random permutations (and therefore all values $\dimp_{i}^{(t)}$ and $\dimp_k$ for inactive $k$) are independent of each other.
  This is because
  \textbf{(a)} all of the inactive features are independent of each other by assumption
  and \textbf{(b)} the permutations $\sigma^{(t)}$ were artificially independently by design as they were sampled independently in Line \ref{line:random_permute}.
  We now observe that
  $\min_{k \notin S} \dimp_k \le
  \min_{i, t} \dimp_{i}^{(t)}$
  happens if and only if the
  minimum across all of these
  $(T-1)p + p-s$ numbers is in the
  $p-s$ numbers $\dimp_{k}$ for $k\notin S$. By symmetry, this probability is exactly $\frac{p-s}{(T-1)p + p-s} \le
  \frac{1}{T}$.
  
  As for bounding the probability that
  $\max_{k\in S} \dimp_k \ge \min_{i, t} \dimp_{i}^{(t)}$, 
  we note that by the discussion above, the extra $\dimp_{i, t}$ can be thought of $\dimp_{k}$ for \emph{extra} inactive features
  indexed by $i, t$. The probability that
  $\max_{k\in S} \dimp_k \ge \min_{k\notin S} \dimp_{k}$ can be bounded as in Theorem \ref{thm_additive_monotonic}.
  Taking a union bound with
  $\Pr\left[{\min_{k \notin S} \dimp_k \le
  \min_{i, t} \dimp_{i}^{(t)}}\right]$ proves the theorem's statement.
\end{proof}
\subsection{Experimental Configurations}
All numerical simulations were conducted using Python and the SkLearn packages
to implement both Lasso and the single-depth decision trees. The Lasso
regression was fit using a regularization strength of $C = 2$ and the
\textit{liblinear} solver. Single-depth decision trees splitting point were
identified by the variance reduction as described in the pseudocode of
Algorithm \ref{alg:stump}. Data was generated randomly using the PyTorch
package and all simulations were performed using an NVIDIA RTX A4000 GPU with
Python's CUDA ecosystem.

\fi
\end{document}